\documentclass[journal]{IEEEtran}
\usepackage[utf8]{inputenc}
\usepackage{amsmath,amsthm,amsfonts}
\usepackage{xcolor}
\usepackage{svg}
\usepackage{caption}
\usepackage{subcaption}

\newcommand{\SC}{{\mathcal{C}}}
\newcommand{\SD}{{\mathcal{D}}}
\newcommand{\SX}{{\mathcal{X}}}
\newcommand{\RR}{\mathbb{R}}
\newcommand{\op}{\operatorname}
\newtheorem{proposition}{Proposition}[section]
\newtheorem{theorem}[proposition]{Theorem}
\newtheorem{corollary}[proposition]{Corollary}
\newtheorem{remark}[proposition]{Remark}

\title{Metric Tools for Sensitivity Analysis with Applications to Neural Networks}

\usepackage{hyperref}
\definecolor{darkred}{rgb}{1,0,0} 
\definecolor{darkgreen}{rgb}{0,1,0}
\definecolor{darkblue}{rgb}{0,0,1}
\hypersetup{colorlinks,
            linkcolor=darkblue,
            filecolor=darkblue,
            urlcolor=darkred,
            citecolor=darkgreen}

\usepackage{tikz}

\definecolor{lime}{HTML}{A6CE39}
\DeclareRobustCommand{\orcidicon}{%
    \begin{tikzpicture}
    \draw[lime, fill=lime] (0,0) 
    circle [radius=0.16] 
    node[white] {{\fontfamily{qag}\selectfont \tiny ID}};
    \draw[white, fill=white] (-0.0625,0.095) 
    circle [radius=0.007];
    \end{tikzpicture}
    \hspace{-2mm}
}
\newcommand{\orcid}[1]{\href{https://orcid.org/#1}{\orcidicon}}
\author{Jaime Pizarroso \orcid{0000-0002-1806-9191}, David Alfaya \orcid{0000-0002-4247-1498}, José Portela \orcid{0000-0002-7839-8982} and Antonio Muñoz \orcid{0000-0002-8844-441X} %
\thanks{The second author was supported by MICINN grant PID2019-108936GB-C21.}}

\date{July 2022}

\begin{document}
\maketitle
\begin{abstract}
As Machine Learning models are considered for autonomous decisions with significant social impact, the need for understanding how these models work rises rapidly. Explainable Artificial Intelligence (XAI) aims to provide interpretations for predictions made by Machine Learning models, in order to make the model trustworthy and more transparent for the user. For example, selecting relevant input variables for the problem directly impacts the model's ability to learn and make accurate predictions, so obtaining information about input importance play a crucial role when training the model. One of the main XAI techniques to obtain input variable importance is the sensitivity analysis based on partial derivatives. However, existing literature of this method provide no justification of the aggregation metrics used to retrieved information from the partial derivatives. 

In this paper, a theoretical framework is proposed to study sensitivities of ML models using metric techniques. From this metric interpretation, a complete family of new quantitative metrics called $\alpha$-curves is extracted. These $\alpha$-curves provide information with greater depth on the importance of the input variables for a machine learning model than existing XAI methods in the literature.  We demonstrate the effectiveness of the $\alpha$-curves using synthetic and real datasets, comparing the results against other XAI methods for variable importance and validating the analysis results with the ground truth or literature information. 
\end{abstract}

\begin{IEEEkeywords}
    Sensitivity, Machine learning, Feature importance, Explainable AI, Regression, Feature Engineering, Neural Networks
\end{IEEEkeywords}

\section{Introduction}
\label{section:introduction}

Artificial Intelligence (AI) has gained popularity in the last few years, exceeding expectations in a variety of fields. One example is in the field of predictive analytics, where Machine Learning (ML) models are used to make predictions about future events based on data patterns (\cite{Kohli_2018, Shamout_2021, Rashid_2021}). As data availability increases and more complex problems are tackled, models with a higher number of parameters are needed to accurately learn from data \cite{dean_2022}.

This increase in the complexity of the model is associated with a lack of interpretability and affects its credibility and trust. Explainable Artificial Intelligence (XAI) is a relatively recent field whose main objective is to make ML models trustworthy \cite{Samek_2021, Samek2019-mm}. There is a growing interest in XAI, as it can help address some of the concerns around responsible AI. For example, if an AI system is used to make decisions that could have significant social impact (such as in healthcare or finance), then it is important that there is a way to understand how and why the system arrived at its decisions \cite{Barredo_2020, Benjamins_2019, Preece_2018}. This would allow for accountability and transparency, two key components of responsible AI. Furthermore, XAI techniques are not only useful for validating a ML model, but can also be used to retrieve information from the dataset itself. This information can be used to corroborate the prior knowledge of a field, which is an important aspect for evaluating the quality of a model \cite{Carvalho_2019}.

 One important aspect of explainable models is the ability to understand and interpret the factors that drive their predictions. Variable importance metrics are a key tool in this endeavor, as they provide insight into which features of the input data are most important in determining the model's output.
 This can be useful for building trust in a model, as well as for identifying potential biases or errors in the model. On the one hand, some ML model topologies such as decision trees or linear regression are inherently transparent and provide variable importance measures based on model parameters. On the other hand, neural networks (NN) models are hard to interpret and additional methods must be used to calculate variable importances.

NN models have gained popularity in recent years due to their ability to learn complex patterns from data and make accurate predictions. Despite their impressive performance, NN models are not commonly used in critical applications due to their lack of interpretability \cite{Cheng_2018}. Consequently, improving the methods to provide interpretability to NN models would unlock the potential of this type of models in applications where their adaptation capabilities provide a higher added value than traditional interpretable models \cite{Samek_2021}.

Some techniques are already available to estimate variable importance of NN models. The most commonly used are input permutation \cite{input_permutation_2021}, which consists in perturbing the input data and observing the effect on the model's output and SHAP (SHapley Additive exPlanations) \cite{Shapley_1953}, which assigns an importance value to each feature by averaging over all possible coalition of features. These techniques have notable advantages: they do not depend of the topology of the ML model being analyzed (model-agnostic method) and provide quantifiable information of variable importance \cite{perm_shap_2020, Hariharan2022}. 

However, all of these techniques provide a global variable importance without analyzing the distribution of the local importance along the input space. This lack of information can make it difficult to understand how a feature is impacting the model's predictions in different regions of the input space, which can be critical for many applications. A potential consequence of not addressing this question is considering a variable which only affects the output in a specific region of the input space as less important overall compared to a variable with a wider-range impact, even if its individual effect is smaller. This could lead to flawed decision making and incorrect conclusions about the data, neglecting a variable with a notable effect on the output of the model.   This highlights the need for further research on developing variable importance metrics that can provide a more comprehensive understanding of feature importance.
As far as the authors are aware, there is currently no XAI method that quantitatively addresses this question.

In this paper, a method to obtain information about local and global importances of input variables is presented. The developed method combines the sensitivity analysis of ML models using partial derivatives \cite{dimopoulos_use_1995, Gevrey_2003, Gevrey_2006} with a mathematical study of certain metric spaces and operator metrics which can be associated to the model and the dataset. These partial derivatives, also called sensitivities, measure the degree to which the output of the model is affected by changes in the input variables. These sensitivities can be calculated for the samples in the input dataset, obtaining a distribution of sensitivity values for each of the input variables. Information of the model can be retrieved from these distributions, such as which variable has the greatest impact on the predictions.

In this work, a global theoretical metric interpretation of the sensitivity of a ML model which takes into account the whole dataset at once is presented and metrics which aggregate the local sensitivities across the dataset are derived from it as pointwise Lipschitz constants of a certain variation operator associated to the model and the dataset. The pointwise Lipschitz constant of a function between two metric spaces is a mathematical tool used precisely to quantify to what extent a variation of an input affects its output (see, for instance, \cite{Durand2010}). Computing local and global Lipschitz constants of the NN model itself has been used in the literature for other purposes as measures of NN robustness \cite{Szegedy2014} and deep NN stability \cite{Thomas2022}, however , to the authors' knowledge, analysing sensitivity from the perspective of this mathematical tool is a new approach. Moreover the novel proposed metric framework allows us to obtain novel sensitivity measures for general ML models with relevant applications to obtaining variable importance metrics.

Hence, a complete family of new quantitative metrics called $\alpha$-curves is extracted. These $\alpha$-curves provide information with greater depth of the variation of the output of a ML model with respect to a specific variable throughout the entire input space. This information not only allows to determine which are the most important input variables for the model, but also answers the question if there are regions in the input space where a variable is specially relevant.

 In this paper we demonstrate how the $\alpha$-curves method is able to provide valuable information about the inner workings of the model compared to other commonly used XAI methods.
Specifically, the $\alpha-$curves methodology overcomes other methods' disadvantages, as it does not assume linear relationships between output and input variable (SHAP), it provides global to local level explanations, and the computational complexity is comparable to the sensitivity analysis based on partial derivatives, substantially lower than SHAP.

The rest of the paper is organized as follows. Section  \ref{section:sota} collects a state-of-the-art review of XAI techniques applied to obtaining variable importance metrics. In section \ref{section:metricsensitivity}, the theoretical framework and the $\alpha$-curves quantitative metrics are presented. Section \ref{section:methodology-of-alpha-curves} provides a methodology to use $\alpha$-curves as sensitivity analysis method of ML models. Section \ref{section:experiments} presents examples of this methodology applied to both synthetic and real datasets, demonstrating that the method presented in this paper is able to retrieve information from ML models with greater detail than other XAI techniques. Section \ref{section:conclusions} concludes the article and presents future research lines. 

\section{state of the art}
\label{section:sota}

At the time of writing, XAI techniques can be classified following three characteristics \cite{molnar_2022}:
\begin{itemize}
    \item Intrinsic vs post-hoc: model simple structure provides intrinsic explanation for its decisions or a method shall be applied to the model after training to retrieve information.
    \item Model-specific vs model-agnostic: XAI technique is applicable to specific ML models or to any ML model.
    \item Global vs local: method explains the behaviour of the model for the entire input space or for individual points.
\end{itemize}

It is out of the scope of this paper to provide an in-depth state-of-the-art review of XAI techniques, so we only gather a review of post-hoc  regression techniques focused  on variable importance  metrics. For a broader review of XAI methods, we refer the reader to \cite{Barredo_2020, Letzgus_2022, Minh_2022, Speith_2022, bykov2020i}.  Apart from the intrinsic explainable models, such as linear regression and the family of decision trees \cite{kastner_2021}, the main variable importance XAI methods are:

\begin{enumerate}

       \item Input permutation \cite{input_permutation_2021, Gevrey_2003}. The technique involves shuffling the values of one input feature and observing the effect on the model's prediction.  The resulting change in a chosen error metric for each input permutation represents the relative importance of each input variable.
       \item Input perturbation \cite{scardi_developing_1999, Gevrey_2003}. Similar to input permutation, it consists in adding a small perturbation to each input variable while maintaining the other inputs at a constant value. The resulting change in a chosen error metric for each input perturbation represents the relative importance of each input variable.       
       \item Partial derivatives method for sensitivity analysis \cite{Gevrey_2003, Gevrey_2006, dimopoulos_use_1995, dimopoulos_neural_1999, munoz_1998, white_statistical_2001}. It performs a sensitivity analysis by computing the partial derivatives of the model output with regard to the input neurons evaluated on the samples of the training dataset (or an analogous dataset). 
       \item Shapley values \cite{Shapley_1953, strumbelj_kononenko_2010}: originated in game theory, the shapley value is essentially the average expected marginal contribution of one variable after all possible input variable combinations have been considered. An evolution of this method is the SHapley Additive exPlanation (SHAP) \cite{Lundberg_2017} method, where the Shapley Values for an ML model are calculated based on LIME (instead of calculating all combinations), reducing the computational resources.
\end{enumerate}

The following section presents a brief explanation of the sensitivity analysis based on partial derivatives, which is the basis for developing the $\alpha-$curves methodology.

\subsection{Sensitivity analysis based on partial derivatives}
 Given a  Neutral Network model trained, Feature importance is taken as the mean squared sensitivity of the output with regard to the input variable:
\begin{equation*}
\label{eqn:M_SQ_AV_S}
	S_{i}^{sq} = \sqrt{\frac{\sum_{j = 1}^{N}
            \left(s_{i} \big\rvert_{\mathbf{x}_j}\right)^2}{N}}\, ,
\end{equation*}
where $i$ is the index of the feature whose importance we want to calculate, $N$ is the number of samples in the dataset we are using for the sensitivity analysis and $s_{i} \big\rvert_{\mathbf{x}_j}$ is the sensitivity of the output $k$ with respect to the input $i$ evaluated in the sample $\mathbf{x}_j$:
\begin{equation*}
\label{eqn:sensitivity}
	s_{ik} \big\rvert_{\mathbf{x}_n} = \frac{\partial y_k}{\partial
        x_i} \left(\mathbf{x}_n\right) \, .
\end{equation*}

Another two sensitivity-based measures are presented in \cite{pizarroso_2022}: mean and standard deviation of sensitivities:
\begin{equation*}
\label{eqn:M_AV_S}
    S_{i}^{avg} = \frac{\sum_{j = 1}^{N} s_{i} \big\rvert_{\mathbf{x}_j}}{N}
\end{equation*}
\begin{equation*}
\label{eqn:STD_S}
      S_{i}^{sd} =
          \sigma\left(s_{i} \big\rvert_{\mathbf{x}_j}\right); \, \, \,  j \in
          \{1,\dots,N\} \, .
\end{equation*}
Based on these measures, the following information can be obtained from a ML model:
\begin{itemize}
    \item an input variable has a non-linear relationship with the output if $S_{i}^{sd} >> 0$.
    \item an input variable has a linear relationship with the output if $S_{i}^{sd} \approx 0$ and  $S_{i}^{avg} \neq 0$.
    \item an input variable has no relationship with the output if standard deviation $S_{i}^{sd} \approx 0$ and $S_{i}^{avg} \approx 0$.
\end{itemize}

These are useful measures to retrieve information from a ML model. In \cite{pizarroso_2022} a comparison of sensitivity analysis using partial derivatives with most of the other methods is performed. The main advantage of this method is that it provides feature importance measures together with information about the relationship between the output and the input, requiring less computational resources compared to other techniques.
 
 However, the feature importance measures give few information about the sensitivity distribution along the input space. An input variable with low sensitivities in most of the input space but with high sensitivity in certain samples would be assigned a low importance, misleading the user.  
The $\alpha-$curves method is an evolution of sensitivity analysis, providing a metric interpretation of the partial derivatives distribution. This provides extra information by not only giving the same feature importance information, but also provides information about how the sensitivity with respect to a variable is distributed in the input space.

\section{A metric interpretation of sensitivity}
\label{section:metricsensitivity}
%

Let $f:\RR^n\longrightarrow \RR^m$ be a differentiable function. Let $\SX=\{\bar{x}_i\}_{i=1}^N$ be a dataset with $\bar{x}_i=(x_{i,1},\ldots,x_{i,n})\in \mathbb{R}^n$ for each $i=1,\ldots,N$. We propose the following metric framework for analysing the sensitivity of the function $f(x_1,\ldots,x_n)$ with respect to variable $X_j$ over the dataset $\SX$. We will analyze variations of the values of $f$ at the points of $\SX$ when we perturb each point $\bar{x}_i$ with a variation in the $j$-th component of the point in the following way. We will measure the total variation of the values
$$f(x_{i,1},\ldots,x_{i,j}+h_{i},\ldots, x_{i,n})$$
when we introduce small perturbations $h_1,\ldots,h_N\in \mathbb{R}$ on the variable $X_j$ of each point $\bar{x}_1,\ldots,\bar{x}_N\in \SX$ respectively.

\sloppy In order to make this precise, we need to fix a way to measure the total variation of $f$ across the dataset $\SX$ and a way to measure the perturbation $(h_1,\ldots,h_N)$. Let us fix metrics $\lVert-\rVert_H$ and $\lVert-\rVert_Y$ on $H:=\mathbb{R}^N$ and $Y:=\op{Fun}(\SX,\mathbb{R}^m)\cong \mathbb{R}^{mN}$ respectively. Then we define the total variation of $f$ over $\SX$ by a perturbation $\bar{h}=(h_1,\ldots,h_N)\in H$ on variable $X_j$ as
\begin{multline*}
    v_{\SX,j}(f,\bar{h})=\\
    \lVert \left(f(x_{i,1},\ldots,x_{i,j}+h_{i},\ldots, x_{i,n}) - f(x_{i,1},\ldots,x_{i,n}) \right)_{i=1}^N \rVert_Y.
\end{multline*}

We define the \emph{sensitivity of $f$ with respect to variable $X_j$ over the dataset $\SX$ for the metrics $\lVert-\rVert_H$ and $\lVert-\rVert_Y$} as the maximum variation $v_{\SX,j}(f,\bar{h})$ relative to the size of small perturbations $\bar{h}$.
$$s_{\SX,j}(f):=\lim_{\varepsilon \to 0} \frac{ \sup_{\lVert \bar{h} \rVert_H=\varepsilon} v_{\SX,j}(f,\bar{h})}{\varepsilon}.$$

A natural setup for this metric analysis is to choose the involved metrics to be $L^p$ norms. Recall that for each $p\in [1,\infty)$, we define the $L^p$ norm as
$$\lVert (x_1,\ldots,x_M) \rVert_p=\left(\sum_{i=1}^M |x_i|^p \right)^{1/p}\, .$$
Taking the limit when $p\to\infty$, we also have
$$\lVert (x_1,\ldots,x_M)\rVert_\infty=\max\{|x_i|\}\, .$$
In this case, explicit formulas for $s_{\SX,j}(f)$ can be computed in terms of the differential of $f$ at each point in $\SX$.

Let $d_jf$ denote the differential of $f$ with respect to variable $X_j$, i.e., if $f=(f_1,\ldots,f_m)$, then
$$d_jf= \left( \frac{\partial f_1}{\partial X_j} dX_j, \ldots, \frac{\partial f_m}{\partial X_j} dX_j\right)\, .$$

Then the following theorem (whose complete proof can be found at \hyperref[annex:proofs]{Annex I}) enables an efficient and simple computation of the metric sensitivity invariant $s_{\SX,j}$ when the chosen metrics are $L^p$ metrics.

\begin{theorem}
\label{thm:mainTheorem}
Let $f=(f_1,\ldots,f_m):\mathbb{R}^n\longrightarrow \mathbb{R}^m$ be a $\SC^2$ function. Assume that $\lVert - \rVert_H=\lVert - \rVert_p$ and $\lVert - \rVert_Y=\lVert - \rVert_q$.
\begin{itemize}
    \item If $p\le q$ then
    $$s_{\SX,j}(f)=\max_{i}\left \{ \left \lVert d_jf(\bar{x}_i) \right \rVert_q \right\} .$$
    \item Otherwise, if $p>q$ then
    $$s_{\SX,j}(f)=\left (\sum_{i=1}^N \left( \sum_{k=1}^m \left |\frac{\partial f_k}{\partial X_j}(\bar{x}_i)\right |^q \right)^{\frac{p}{p-q}} \right )^{\frac{p-q}{pq}}.$$
\end{itemize}
\end{theorem}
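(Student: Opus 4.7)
The plan is to compute $s_{\SX,j}(f)$ in two stages. First, since $f$ is $\SC^2$ and $\SX$ is finite, Taylor's theorem gives, uniformly in $i$,
$$f(x_{i,1},\ldots,x_{i,j}+h_i,\ldots,x_{i,n}) - f(\bar{x}_i) = \frac{\partial f}{\partial X_j}(\bar{x}_i)\, h_i + R_i(h_i),$$
with a uniform quadratic bound $\lVert R_i(h_i)\rVert \le Ch_i^2$. On the sphere $\lVert\bar h\rVert_p=\varepsilon$ one has $|h_i|\le \varepsilon$ for every $i$, so the vector of remainders contributes $O(\varepsilon^2)$ to $v_{\SX,j}(f,\bar h)$, which vanishes after dividing by $\varepsilon$ as $\varepsilon\to 0$, uniformly over the sphere. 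The linear part is positively homogeneous of degree $1$, so its sup over $\lVert\bar h\rVert_p=\varepsilon$ divided by $\varepsilon$ is independent of $\varepsilon$. Consequently,
$$s_{\SX,j}(f) = \sup_{\lVert \bar h\rVert_p=1} \left\lVert \left( \frac{\partial f}{\partial X_j}(\bar x_i)\, h_i \right)_{i=1}^N \right\rVert_q.$$

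Second, I would evaluate this supremum by a constrained-optimisation argument. Writing $a_i := \lVert d_jf(\bar x_i)\rVert_q$ and expanding the outer $q$-norm (the $q=\infty$ case is analogous), the sup becomes
$$\max_{\sum |h_i|^p=1}\; \sum_{i=1}^N a_i^q |h_i|^q.$$
The substitution $t_i := |h_i|^p$ carries this into a maximisation of $\sum_i a_i^q t_i^{q/p}$ over the standard simplex in $\RR^N$, which splits according to the sign of $(q/p)-1$. If $p\le q$, the exponent $q/p\ge 1$ makes the objective convex, so its maximum on the simplex is attained at a vertex, giving $\max_i a_i^q$ and hence $s_{\SX,j}(f)=\max_i \lVert d_jf(\bar x_i)\rVert_q$. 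If $p>q$, the exponent $q/p\in(0,1)$ makes the objective concave, and H\"older's inequality with conjugate exponents $p/(p-q)$ and $p/q$ applied to $u_i=a_i^q$ and $v_i=t_i^{q/p}$ yields the sharp upper bound
$$\sum_i a_i^q t_i^{q/p} \le \left(\sum_i a_i^{pq/(p-q)}\right)^{(p-q)/p},$$
with equality when $t_i\propto a_i^{pq/(p-q)}$; taking the $1/q$-th root and expanding $a_i$ in terms of partial derivatives yields the stated formula.

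The main technical obstacle is the first step: one must verify that the Taylor remainders contribute negligibly \emph{uniformly} over the entire $p$-sphere of radius $\varepsilon$, so that the limit and the supremum may be exchanged and the nonlinear $v_{\SX,j}(f,\bar h)$ replaced by the linear operator it approximates. The $\SC^2$ hypothesis together with the finiteness of $\SX$ make the quadratic bound uniform in a single constant $C$; once this reduction is in place, the remaining argument is a clean convex/concave optimisation on the simplex.
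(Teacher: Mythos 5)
Your proposal is correct and follows essentially the same route as the paper's proof in Annex I: a Taylor-expansion argument reducing $s_{\SX,j}(f)$ to the operator norm of the block-diagonal differential $\SD_{\SX,j}f$ (Theorem \ref{thm:mainEstimate}), followed by the substitution $t_i=|h_i|^p$ turning the problem into a simplex optimisation solved by convexity at a vertex when $p\le q$ and by H\"older's inequality with conjugate exponents $\frac{p}{p-q}$ and $\frac{p}{q}$ when $p>q$. The only cosmetic difference is that you merge the paper's separate $p=q$ and $p<q$ cases into a single convexity argument.
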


\begin{remark}
Observe that when we take $L^p$ and $L^q$ norms on $H$ and $Y$ respectively, then $s_{\SX,j}(f)$ is a norm.
\end{remark}

\subsection{Sensitivity \texorpdfstring{$\alpha$}{alpha}-curves associated to a real function}
\label{section:alpha-curve}

Some interesting additional analysis can be derived from Theorem \ref{thm:mainTheorem} when the function $f$ is scalar. Let $f:\mathbb{R}^n\longrightarrow \mathbb{R}$ be a scalar function and let $\SX=\{\bar{x}_i\}_{i=1}^N$ be a dataset with $\bar{x}_i\in \mathbb{R}^n$. The previous Theorem allows us to compute explicitly the sensitivity $s_{\SX,j}(f)$ for each choice of $L^p$ norms on the perturbation and the target values. We observe, however, that when the target of the function is $\mathbb{R}$, some of the sensitivities agree for different choices of $(p,q)$, resulting in the fact that all the $L^p$ norm choices can be summarized on a $1$-parametric set of metrics which can then be rewritten in terms of the $\alpha$-mean of the values $\left | \frac{\partial f}{\partial X_j}(\bar{x}_i)\right |$ when $\bar{x}_i$ runs through the dataset $\SX$ and $1\le \alpha\le \infty$.

\begin{corollary}
\label{cor:alpha-curve}
If $f:\mathbb{R}^n\longrightarrow \mathbb{R}$ is a $\SC^2$ function, $\lVert - \rVert_H=\lVert - \rVert_p$ and $\lVert - \rVert_Y=\lVert - \rVert_q$ with $p> q$, then
$$s_{\SX,j}(f)=N^{1/\alpha} M_{\alpha}\left \{ \left | \frac{\partial f}{\partial X_j}(\bar{x}_i)\right | \right\},$$
where $\alpha=\frac{pq}{p-q}$ and
$$M_{\alpha}\{t_1,\ldots,t_N\}=\left( \frac{\sum_{i=1}^N t_i^\alpha}{N} \right)^{1/\alpha}$$
is the generalized $\alpha$-mean of the values. When $p\le q$ then
$$s_{\SX,j}(f)=M_{\infty} \left \{ \left | \frac{\partial f}{\partial X_j}(\bar{x}_i)\right | \right\} = \max_i \left \{ \left | \frac{\partial f}{\partial X_j}(\bar{x}_i)\right | \right\}.$$
\end{corollary}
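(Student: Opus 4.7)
The plan is to derive the corollary as a direct specialization of Theorem \ref{thm:mainTheorem} to the scalar case $m=1$, together with an algebraic rewriting of the resulting expression in terms of the generalized $\alpha$-mean.

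First, I would observe that when $m=1$, the differential $d_jf(\bar{x}_i)=\frac{\partial f}{\partial X_j}(\bar{x}_i)\,dX_j$ is one-dimensional, so its $L^q$ norm simply reduces to $\lVert d_jf(\bar{x}_i)\rVert_q = \left|\frac{\partial f}{\partial X_j}(\bar{x}_i)\right|$ for every $q\in[1,\infty]$. Applying the first bullet of Theorem \ref{thm:mainTheorem} when $p\le q$ then immediately yields
$$s_{\SX,j}(f)=\max_i\left\{\left|\tfrac{\partial f}{\partial X_j}(\bar{x}_i)\right|\right\},$$
which is by definition $M_\infty\{|\partial f/\partial X_j(\bar{x}_i)|\}$, settling the second claim of the corollary.

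Next, for the case $p>q$, I would substitute $m=1$ into the second bullet of Theorem \ref{thm:mainTheorem}. The inner sum over $k$ collapses to a single term, so the expression reduces to
$$s_{\SX,j}(f)=\left(\sum_{i=1}^N \left|\tfrac{\partial f}{\partial X_j}(\bar{x}_i)\right|^{\frac{pq}{p-q}}\right)^{\frac{p-q}{pq}}.$$
Setting $\alpha=\frac{pq}{p-q}$ so that $\frac{1}{\alpha}=\frac{p-q}{pq}$, this is exactly $\bigl(\sum_i |\partial f/\partial X_j(\bar{x}_i)|^\alpha\bigr)^{1/\alpha}$. Pulling out a factor of $N^{1/\alpha}$ from the sum to match the normalization in the definition of the $\alpha$-mean gives $s_{\SX,j}(f)=N^{1/\alpha}M_\alpha\{|\partial f/\partial X_j(\bar{x}_i)|\}$, as desired.

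There is no genuine obstacle in this argument: once the differential is identified with a scalar in the $m=1$ case, everything reduces to careful bookkeeping of exponents. The only point that requires attention is checking that the parameterization $\alpha=pq/(p-q)$ indeed sweeps out the full range $\alpha\in[1,\infty)$ as $(p,q)$ vary with $p>q\geq 1$, so that the $\alpha$-curve is well-defined for every $\alpha\in[1,\infty]$ (with the endpoint $\alpha=\infty$ recovered from the $p\le q$ case). This is an elementary one-variable calculation and completes the justification that the two-parameter family of $L^p$/$L^q$ sensitivities collapses to the one-parameter family of $\alpha$-means.
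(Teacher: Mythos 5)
Your proposal is correct and coincides with the paper's (implicit) argument: the corollary is obtained exactly by specializing Theorem \ref{thm:mainTheorem} to $m=1$, noting that $\lVert d_jf(\bar{x}_i)\rVert_q=\left|\frac{\partial f}{\partial X_j}(\bar{x}_i)\right|$ in the scalar case, and rewriting $\bigl(\sum_i t_i^\alpha\bigr)^{1/\alpha}=N^{1/\alpha}M_\alpha\{t_i\}$ with $\alpha=\frac{pq}{p-q}$. Your closing remark about which $\alpha$ are actually realized is a reasonable extra check (note that hitting $\alpha$ arbitrarily close to $1$ requires letting $p\to\infty$ with $q=1$), but it is not needed for the corollary as stated.
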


This motivates the following definition. Let us define the \emph{$\alpha$-mean sensitivity of $f$ with respect to variable $X_j$ on the dataset $\SX$} as
$$\op{ms}_{\SX,j}^\alpha(f) := M_{\alpha}\left \{ \left | \frac{\partial f}{\partial X_j}(\bar{x}_i)\right | \right\}.$$
Then, define the sensitivity $\alpha$-curve as the map
$$\begin{array}{c}
\quad \op{ms}_{\SX,j}(f) : [1,\infty] \longrightarrow [0,\infty) \quad\\
\quad \quad \quad \quad \quad \; \; \; \; \; \alpha \; \; \; \mapsto  \; \op{ms}_{\SX,j}^\alpha(f)
\end{array}.$$

On the other hand, observe that the Generalized Mean Inequality implies that for each $0\le \alpha<\beta \le \infty$ we have
$$M_{\alpha}\left \{ \left | \frac{\partial f}{\partial X_j}(\bar{x}_i)\right | \right\}\le M_{\beta}\left \{ \left | \frac{\partial f}{\partial X_j}(\bar{x}_i)\right | \right\}$$
and we know that 
$$M_\infty\left \{ \left | \frac{\partial f}{\partial X_j}(\bar{x}_i)\right | \right\}= \lim_{\alpha\to \infty} M_{\alpha}\left \{ \left | \frac{\partial f}{\partial X_j}(\bar{x}_i)\right | \right\}$$
so we conclude that $\op{ms}_{\SX,j}(f)$ is an increasing bounded curve whose limit when $\alpha\to \infty$ is $\op{ms}_{\SX,j}^\infty(f)$. In virtue of the metrical interpretation of the sensitivity from the previous section, a representation of this curve, together with the asymptotic value $\op{ms}_{\SX,j}^\infty(f)$, yields an interesting visualization of the whole sensitivity analysis which is independent on the choice of the $L^p$ norms on the perturbation and target spaces. We will call this representation the \emph{sensitivity $\alpha$-curve associated to $f$ with respect to variable $X_j$ over the dataset $\SX$}.

\subsection{Relation with derivative distributions}
\label{section:distributionsrelation}
The $\alpha$-curves from the previous section can be given an alternative description in terms of the distribution of partial derivatives \cite{pizarroso_2022} mentioned in the introduction. This duality reinforces the usefulness of the $\alpha$-mean sensitivities and the $\alpha$-curve as quantitative tools for performing deep meaningful sensitivity analysis.

Assume that the points of the dataset $\SX$ have been drawn randomly uniformly and that, therefore, they inherit a uniform discrete distribution on them (with probability $1/N$ over each point). Let us consider the function $g_j(x)=\left | \frac{\partial f}{\partial X_j}(x)\right |$ representing the (local) sensitivity of $f$ with respect to $X_j$ at the point $x$. Then a direct computation shows that for each $\alpha\in [1,\infty)$:
$$\mathbb{E}[g_j^\alpha] = \left(\op{ms}_{\SX,j}^\alpha(f)\right)^\alpha.$$
As a consequence, all moment maps of the distributions of partial derivatives $g_j$ can be computed as polynomials in the $\alpha$-mean sensitivities.  In particular, this proofs that the $\alpha$-mean sensitivities encode exactly as much information as the moment maps of the distributions of partial derivatives across the dataset.

This dual interpretation has interesting theoretical ramifications on the interpretation and validity of several sensitivity analysis methodologies. On the one hand, it proves that any qualitative analysis on the distributions of local sensitivities can be aggregated as an analysis of the corresponding $\alpha$-curve instead, showing that $\alpha$-curves are at least as informative as the distribution of local sensitivities. Nevertheless, we will provide some experimental evidences which prove that an analysis on $\alpha$-curves allows an easier detection of certain qualitative properties of the dependence of a function with respect to a variable than the analysis on moment maps of the distributions of partial derivatives.

On the other hand, this allows to provide an additional metric interpretation (and, thus, additional theoretical support) for the usage of the moments of derivative distributions as sensitivity measures, as used in \cite{White_2001, munoz_1998, pizarroso_2022}. 

\section{Methodology of the \texorpdfstring{$\alpha$}{alpha}-curves analysis}
\label{section:methodology-of-alpha-curves}
As a consequence of the previous theoretical analysis, we have built a family of metrics capable of quantifying the sensitivity of any model $f$ with respect to a variable $X_j$. In this section, we will explain some methodologies capable of exploiting this new theoretical framework to detect some patterns in the roles that each variable play in the model. Contrary to other sensitivity analysis methodologies, our proposed method will be able to capture high sensitivity behaviors of a variable which may only occur in a certain local region of the phase space, even if that variable presents generally a low sensitivity across the rest of the dataset.

For simplicity, we will focus the analysis on scalar regression problems. In this case, we have proven that computing the $\alpha$-curves of the function for each variable allows a complete and simultaneous study of all the possible euclidean metric interpretations of sensitivity for each variable across the whole dataset. Analysing the differences between the values of the $\alpha$-sensitivities $\op{ms}_{\SX,j}^\alpha$ for different variables $X_j$ and different choices of $\alpha$ opens new deeper layers to the sensitivity analysis and can be used to detect further properties and interactions between the variables of a model than other methods used in the literature. We proceed to describe a methodology for retrieving some of the properties of the variables of a model from an $\alpha$-curve plot.

We would like to clarify that this is not an exhaustive description of the types of analysis that can be done within the $\alpha$-curve framework. For instance, we believe that this family of sensitivity metrics and the theoretical framework which supports them can be incorporated as part of more complex XAI analysis. It is just a showcase of some of its basic properties.

\subsection{Model sensitivity analysis and \texorpdfstring{$\alpha$}{alpha}-curve plots}
The $\alpha$-curve analysis presented in this work is a tool designed to study scalar regression data models (in particular, as stated in Theorem \ref{thm:mainTheorem} and Corollary \ref{cor:alpha-curve}, $\SC^2$ data models). Thus, in order to analyse raw data with this method, the first step is to build an appropriate representative data model $f$. An important example is to choose $f$ to be an AI system of some type trained over the data. The $\alpha$-sensitivity analysis is always meant to study properties of the chosen model $f$ and not necessarily on the data which generated it (eg., it will analyse the way a trained AI model interacts with its input variables, not the data which was used to train it).

In order to have proper comparisons between variables, it is recommended that the variables are normalized or have comparable magnitudes before constructing the model $f$ and performing the $\alpha$-sensitivity analysis. Otherwise, renormalisations may need to be taken into account when performing the study (the $\alpha$-curve equation is homogeneous of degree 1 on linear scaling of the functions and variables).

An $\alpha$-curve plot (see Figure \ref{fig:fig_sqrt_alpha_curves}) is a 2D plot in which we draw together, for each variable $X_j$, the variation of the $\alpha$-sensitivity of the model, $\op{ms}_{\SX,j}^\alpha(f)$, when $\alpha$ varies. By the Generalized Mean Inequality, we know that each $\alpha$-curve is increasing and bounded. As the curve can sometimes increase very slowly as $\alpha$ increases, and the limit value $\alpha=\infty$ is interesting for our analysis, we will draw the curve up to a certain limit (we found that $\alpha\in [1,16]$ is enough in most experiments) and then add to the plot the asymptotic value $\op{ms}_{\SX,j}^\infty(f)$ for each variable $X_j$. See Section \ref{section:experiments} for examples.

\subsection{Comparison of variables for a fixed \texorpdfstring{$\alpha$}{alpha}}
For each value of $\alpha$, the set of values $\op{ms}_{\SX,j}^\alpha(f)$ provides a sound measure of the sensitivity of the model $f$ with respect to each variable $X_j$. Thus, it can be used to compare which input variables are more relevant for the output in the model.

In the literature, the metrics obtained from $\alpha=1$ and $\alpha=2$ (or, derived ones, like the variance of the distribution of partial derivatives, which can be computed from these two values, cf. Section \ref{section:distributionsrelation}), have been used as sensitivity metrics \cite{White_2001, munoz_1998, pizarroso_2022} and utilized, for instance, for variable pruning \cite{yeh_cheng_2010, zeng_zhen_he_han_2017}. 

Corollary \ref{cor:alpha-curve} now provides additional sound theoretical framework which supports mathematically the choice of any of these  aggregation functions as a way to derive a total sensitivity metric from the values of the derivatives of the function across the dataset. 

On the other hand, each vertical cut $\alpha$ to the $\alpha$-curve plot can be used to compare the variables and draw quantitative and qualitative conclusions about their relative relevance to the model. If there are model-driven reasons to fix a certain metric on the input and perturbation spaces (see \ref{thm:mainTheorem}), then the corresponding $\alpha$-value should be chosen for the comparison. Otherwise, any value of $\alpha$ could, theoretically, be used for the sensitivity comparison task independently. Nevertheless, analyzing the whole picture across all different $\alpha$ allows a deeper understanding of the behavior of the model.

Due to the properties of $\alpha$-means, as $\alpha$ increases, the average value $\op{ms}_{\SX,j}^\alpha(f)$ takes more into account the existence of regions in the dataset where the sensitivity with respect to the variable is higher than average ("exceptionally sensitive" or "localized high sensitivity" behavior). Lower values of $\alpha$ focus instead on the "average behavior" of the function with respect to the variable.

The analysis of high values of $\alpha$ may be crucial for certain tasks like variable pruning. It is possible that a variable has almost no impact on a regression problem if one looks at a generic point in the phase space, but that there exists a mode change in the model making the variable very relevant for the analysis when the inputs move inside a certain critical region (think, for instance, in the case where there exist "activation" variables or states, which enable a different variable to influence the result but otherwise disable it). A general pruning analysis with $\alpha=1$ or $\alpha=2$ could "discard" the variable as irrelevant for the model, whereas it might be the most relevant variable for high $\alpha$ metrics (see, for instance, example .\ref{subsubsec:sqrt})

The limit values $\op{ms}_{\SX,j}^\infty(f)$ included in the plot help identify the extremal cases. They measure the maximum sensitivity of $f$ with respect to the input variable $X_j$ that can be found at any point in the dataset.
\begin{figure*}
    \centering
    \begin{subfigure}{.3\linewidth}
        \centering
        \includegraphics[width=\linewidth, height = 0.9\linewidth]{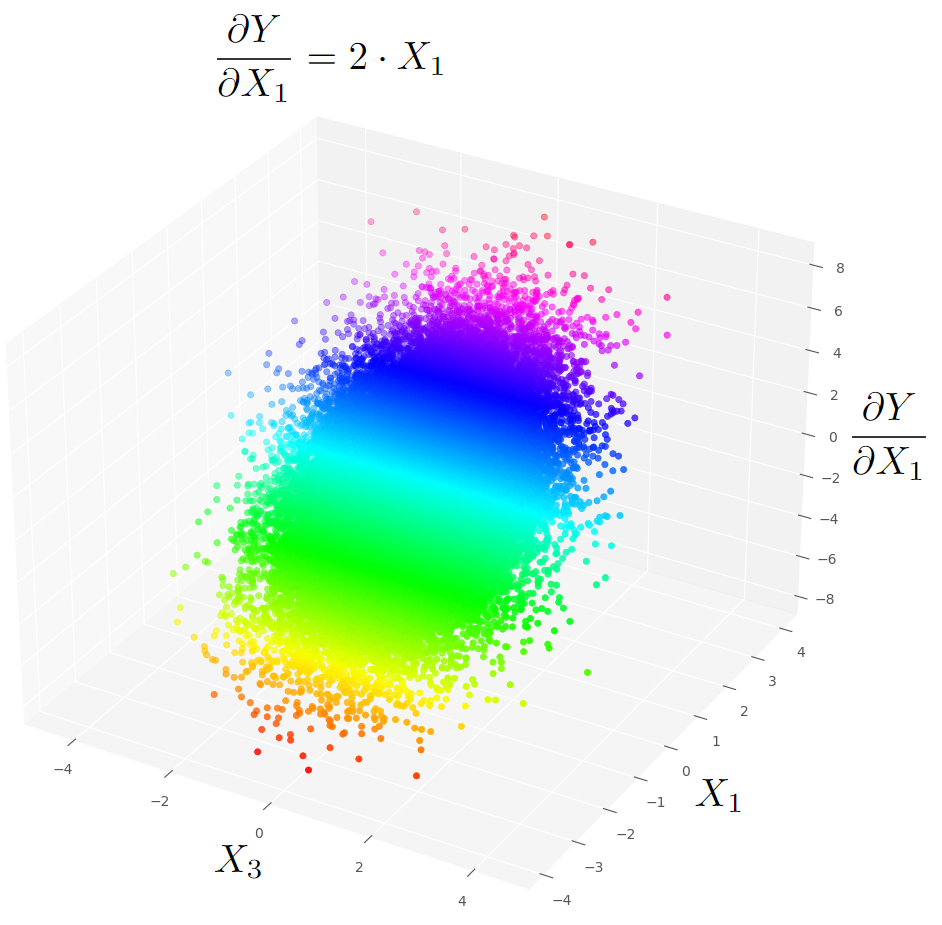}
        \caption{\label{fig:fig_derx1_sqrt}}
        \end{subfigure}
    \begin{subfigure}{.3\linewidth}
        \centering
        \includegraphics[width=\linewidth, height = 0.9\linewidth]{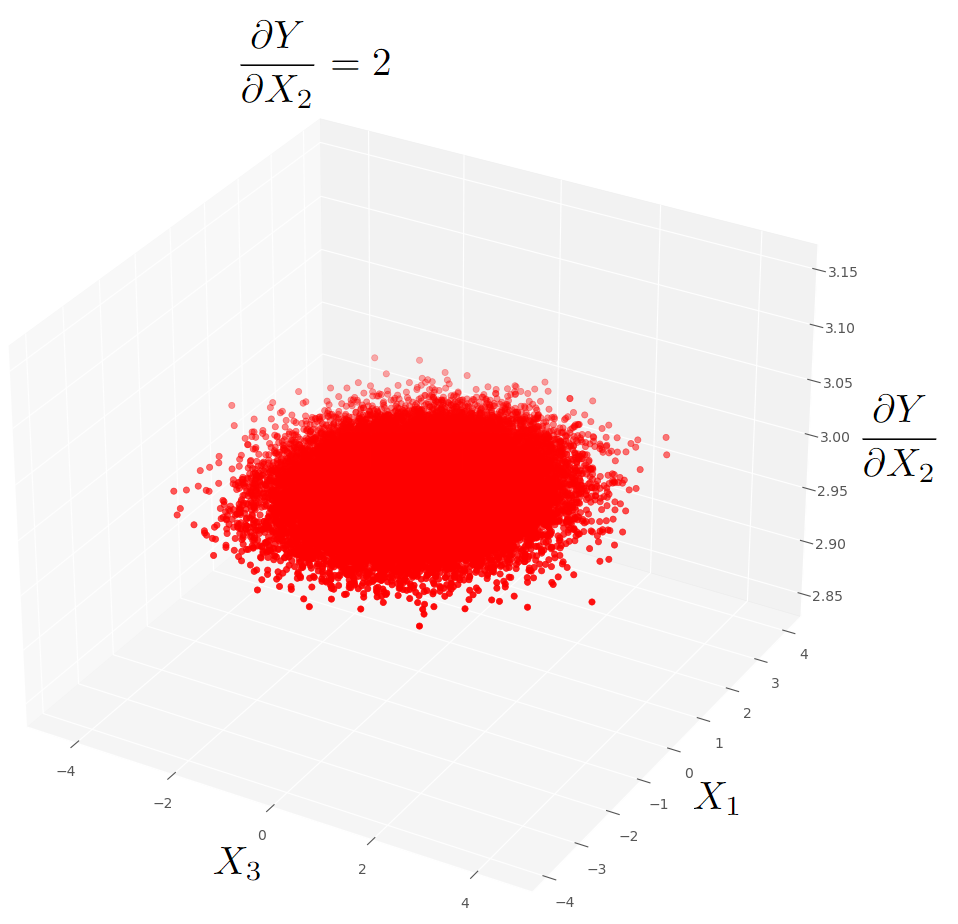}
        \caption{\label{fig:fig_derx2_sqrt}}
    \end{subfigure}
    \begin{subfigure}{.3\linewidth}
        \centering
        \includegraphics[width=\linewidth, height = 0.9\linewidth]{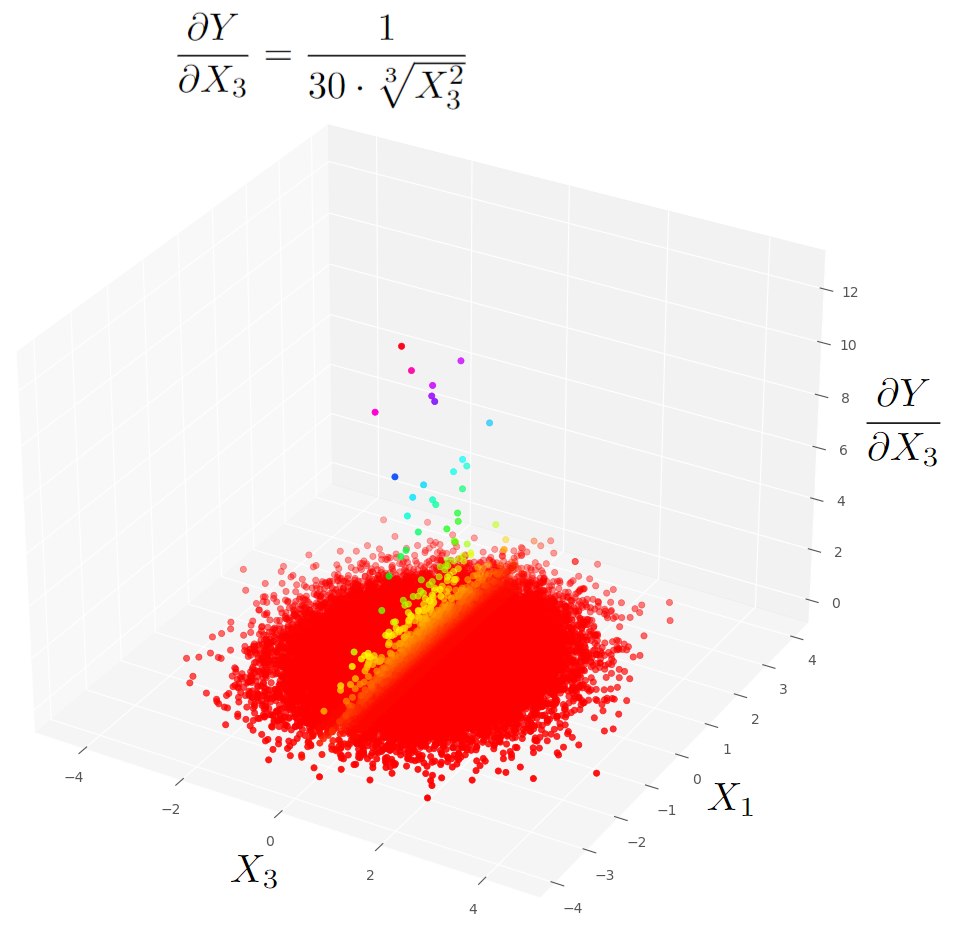}
        \caption{\label{fig:fig_derx3_sqrt}}
    \end{subfigure}
    \caption{\label{fig:fig_der_3d_sqrt} 3D plots of partial derivatives of output $Y$ with respect to inputs $X_1$, $X_2$ and $X_3$ ((a), (b) and (c) respectively) for square root synthetic dataset. X-axis follows $X_1$ and y-axis follows $X_3$ in the three plots. $X_2$ is not used as plot axis due to the irrelevance of this variable on the derivative plots.}
\end{figure*}

\subsection{Analysis of the variation of an \texorpdfstring{$\alpha$}{alpha}-curve}

Due to the aforementioned properties of the $\alpha$-means (consequence of the convexity of the power functions for exponents at least 1), studying the variation of the $\alpha$-sensitivity when $\alpha$ changes can give a lot of information on the dynamics of the variables of the model $f$. Let us study some examples.

\subsubsection{Linearity analysis}
By the Generalized Mean Inequality, the $\alpha$-curve of variable $X_j$ is constant if and only if $f$ is of the form
$$f(X_1,\ldots,X_n)=g(X_1,\ldots,X_{j-1},X_{j+1},\ldots,X_n)+CX_j$$
for some function $g$ depending only on the rest of the variables.
By extension, the closer an $\alpha$-curve is to be flat, the closer the dependence of $f$ with respect to $X_j$ is to a linear dependence. For instance, when an $\alpha$-curve starts almost flat and then starts increasing more starting at some alpha, this can mean that the derivative $\frac{\partial f}{\partial X_j}$ has a low variation through the majority of the dataset, but that there are one or more regions of the phase space where it changes more, either due to its own non-linear behavior (like an activation function, or function where the derivative increases close to a point, like a $\SC^2$-approximation of a square root), or due to an interaction with other variables.

\subsubsection{Irrelevant variables}
As a particular case of the previous analysis, $f$ does not depend on a variable $X_j$ if and only if the $\alpha$-curve is constantly zero. The closer a curve is to 0, the less important the variable is for the model.

If a curve starts very constant and close to 0 but increases afterwards, this indicates that the output of the model has, in general, a low dependence on the variable, but that there exists a region in the phase space in which the variable is indeed relevant for the model.

These properties can be used to improve the specificity of variable pruning methodologies. If a variable presents a low value of $\op{ms}_{\SX,j}^\infty(f)$ (and, thus, the whole $\alpha$-curve is low) then it is not important for the model and it can be safely removed. On the contrary, a variable presenting higher values of the curve for some $\alpha$ (and thus, a higher $\op{ms}_{\SX,j}^\infty(f)$) should not be pruned without a deeper analysis.

\subsubsection{Detection of local regions with high sensitivity}
As outlined before, it is possible for a variable to have low sensitivity for low $\alpha$ but high sensitivity in higher $\alpha$. This makes comparing its $\alpha$-sensitivity with the $\alpha$-sensitivity of other variables depend heavily on $\alpha$. When this happens and a variable is not sensitive for low $\alpha$ but it becomes highly sensitive for high $\alpha$, two things can happen.
\begin{itemize}
    \item The variable shows a non-linear behavior on $X_j$ which makes the partial derivative $\frac{\partial f}{\partial X_j}$ increase only on certain values of $X_j$.
    \item There exists an interaction between the variable and a combination of other variables which makes the derivative become high in a certain region of the phase space.
\end{itemize}
The higher the variation of the $\alpha$-curve and the earlier this variation appears, the stronger and more generalized the interaction or non-linear effect is across the dataset. If the $\alpha$-curve starts flat and then there is a sudden increase, it is more probable that the interaction or non-linear input effect on the output is relevant only in certain bounded areas of the dataset.

A limitation of this method (see Section \ref{section:conclusions}) is that it is difficult to distinguish between the increase in sensitivity produced by an interaction between variables (eg. when they are equally distributed) and a non-linear input effect (which can be thought of as a self-interaction of the variable). We expect to solve this limitation in future work through the usage of complementary interaction analysis methodologies.

\section{Experimental results}
\label{section:experiments}

This section contains XAI analysis performed on various synthetic and real datasets using sensitivity analysis based on partial derivatives, SHAP, input permutation and the $\alpha$-curves method.

\subsection{Synthetic dataset}

A synthetic dataset with known derivatives is used to illustrate the usefulness of the $\alpha$-curves to retrieve information about how the model uses the input variables to predict the output variable. The dataset is composed by 8 input variables $[X_{1}, \ldots, X_{8}]$  and one output variable $Y \in \mathbb{R}$ created as a function of the input variables, i.e., $Y = f\left(\mathbf{X}\right)$. Input variables $\mathbf{X}$ are 50000 samples drawn from a normal distribution with $\mu = 0$ and $\sigma = 1$. Partial derivatives and SHAP values are calculated analytically from the output expression for each dataset to avoid inherent modelling error which might obfuscate relationships between inputs and outputs. 

\begin{figure*}
    \centering
    \begin{subfigure}{.45\linewidth}
        \centering
        \includegraphics[width=\linewidth, height=\linewidth]{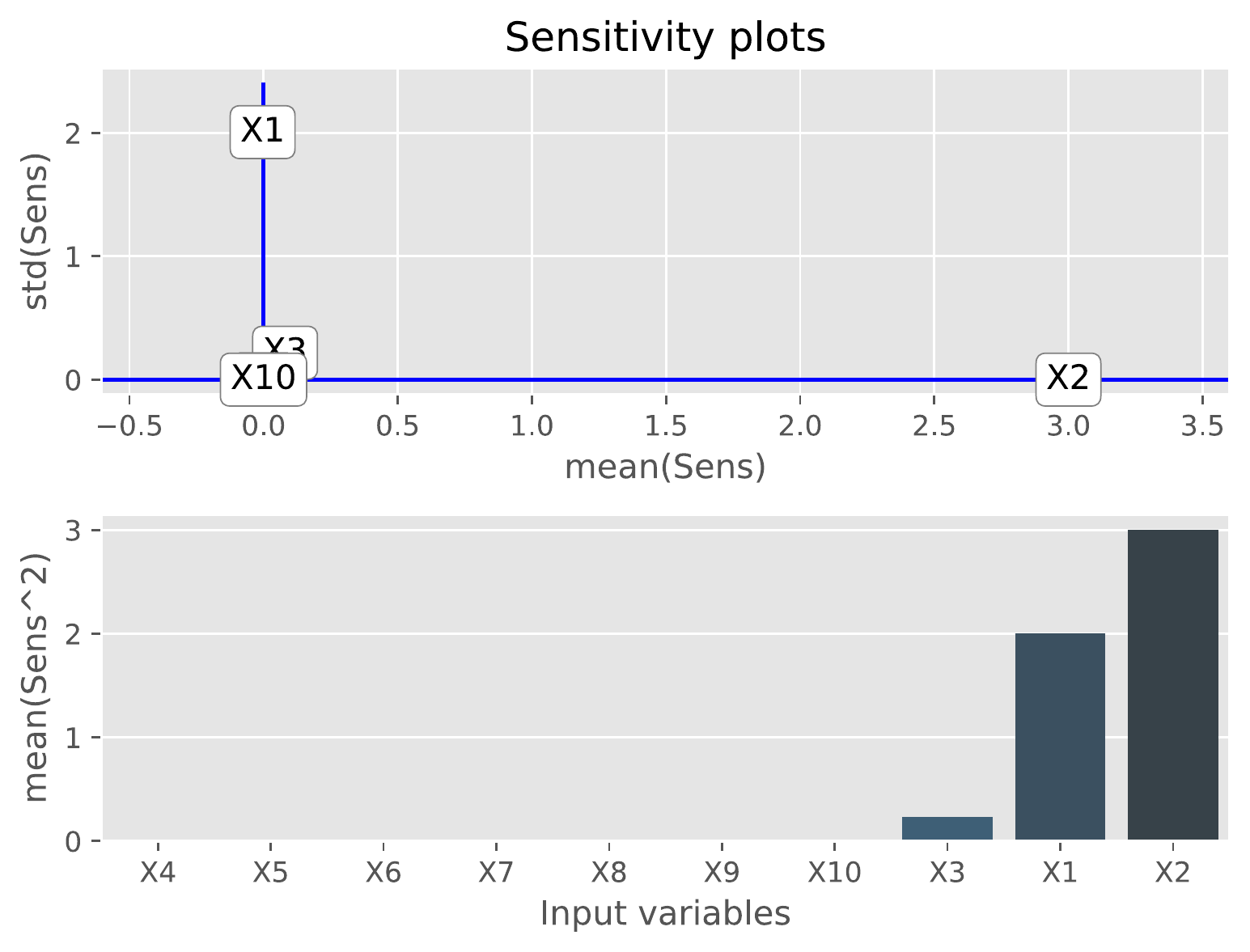}
        \caption{\label{fig:fig_sqrt_sens_plots} Sensitivity plots of cubic root synthetic dataset.}
        \end{subfigure}
    \begin{subfigure}{.45\linewidth}
        \centering
        \includegraphics[width=\linewidth, height=\linewidth]{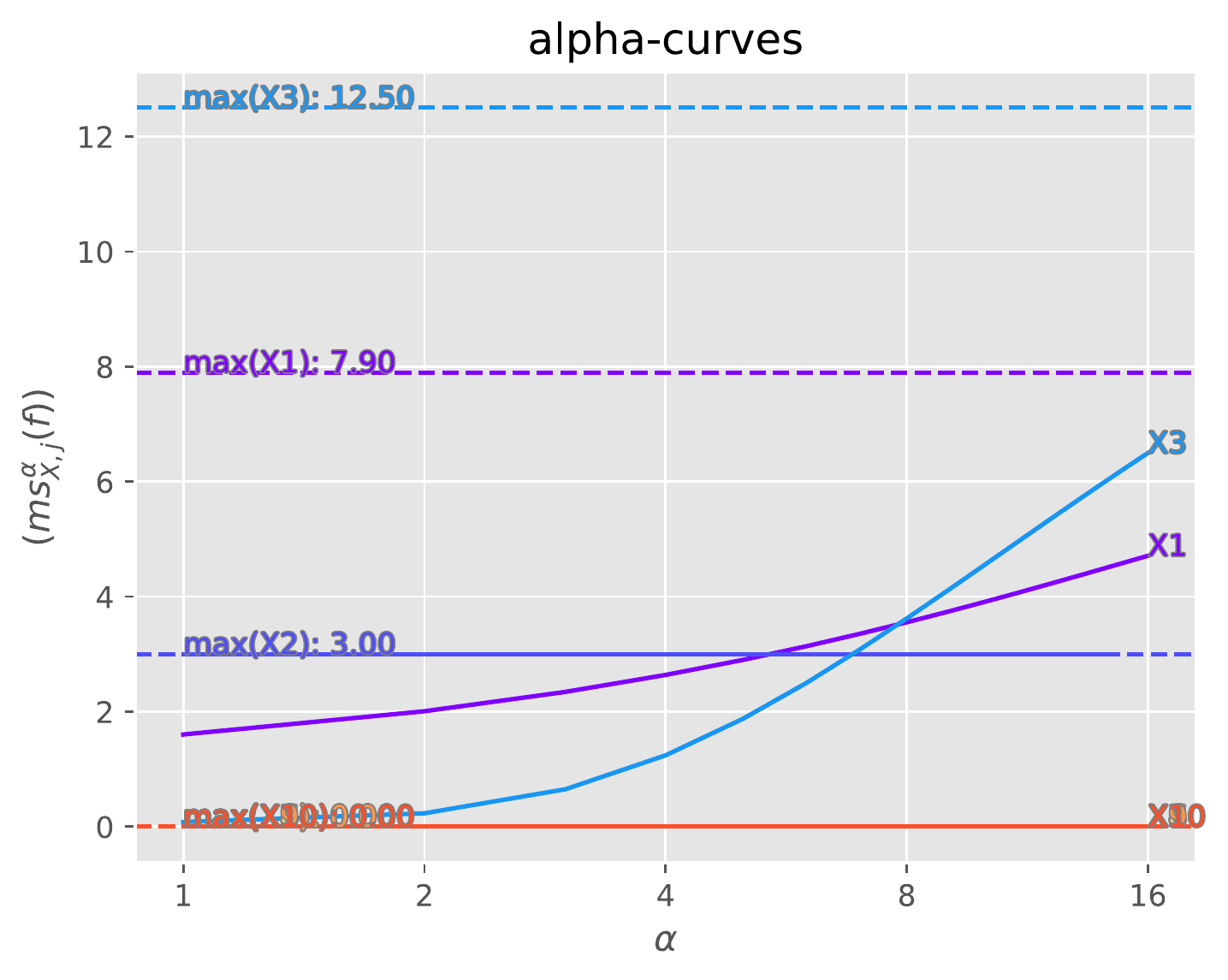}
        \caption{\label{fig:fig_sqrt_alpha_curves} $\alpha$-curves of cubic root synthetic dataset.}
    \end{subfigure}
    \begin{subfigure}{.45\linewidth}
        \centering
        \includegraphics[width=\linewidth, height=\linewidth]{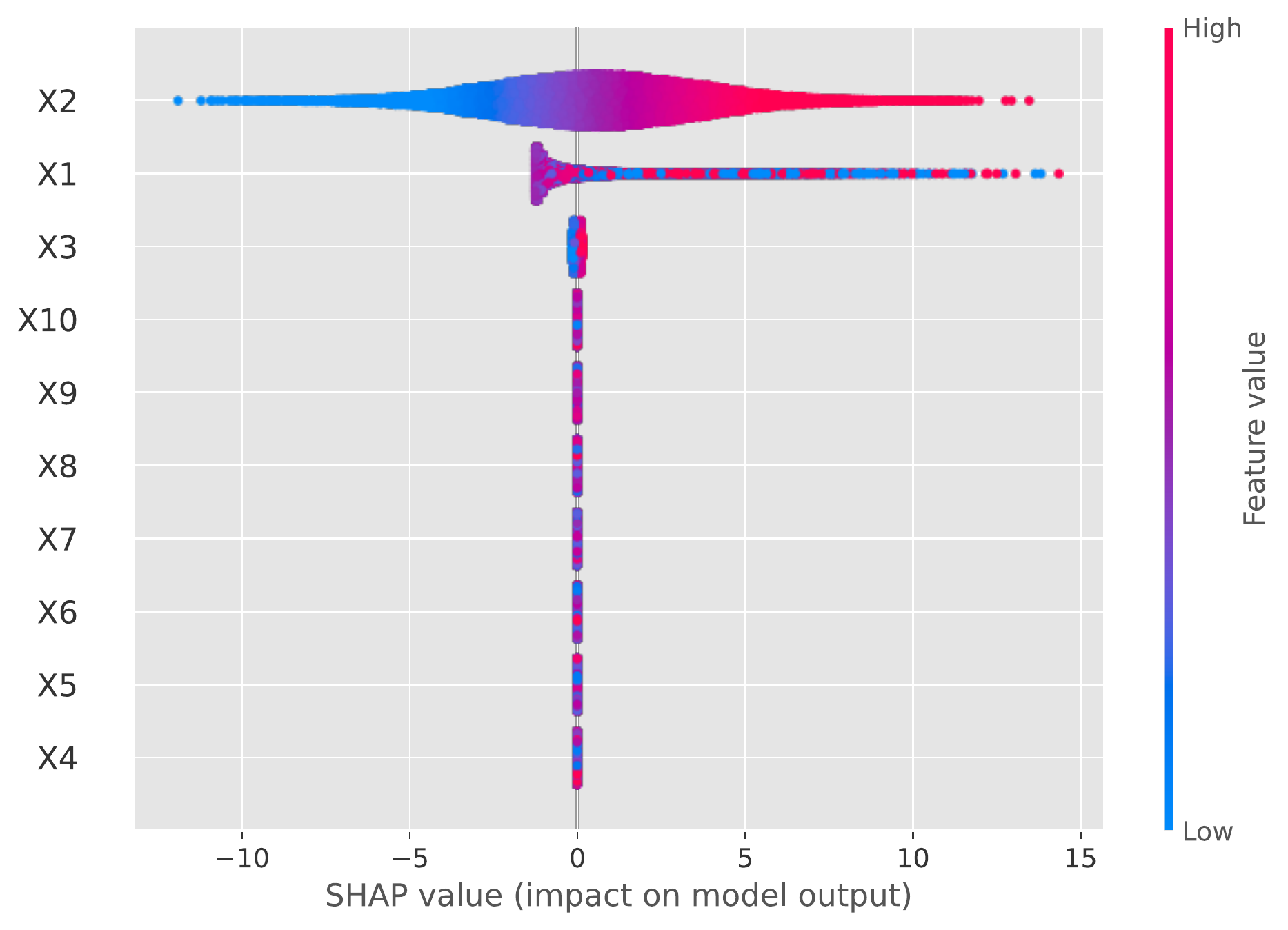}
        \caption{\label{fig:fig_sqrt_shap_values} SHAP values of cubic root synthetic dataset.}
    \end{subfigure}
    \begin{subfigure}{.45\linewidth}
        \centering
        \includegraphics[width=\linewidth, height=\linewidth]{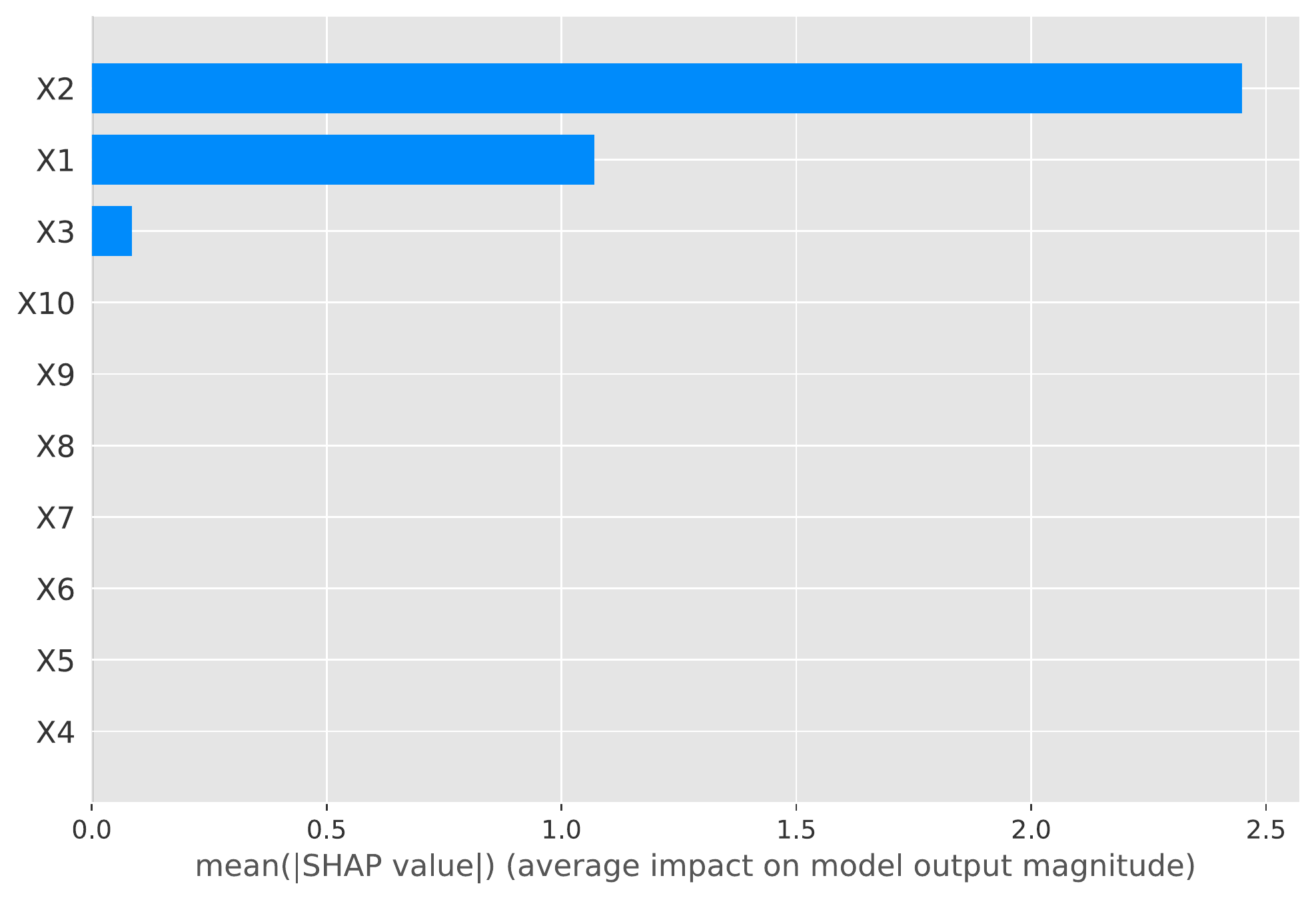}
        \caption{\label{fig:fig_sqrt_shap_importance} SHAP Importance values of cubic root synthetic dataset.}
    \end{subfigure}
    \begin{subfigure}{.45\linewidth}
        \centering
        \includegraphics[width=\linewidth, height=0.8\linewidth]{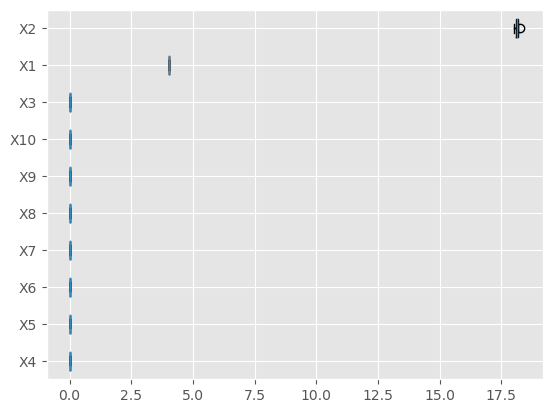}
        \caption{\label{fig:fig_sqrt_input_permutation} Input permutation importance values of cubic root synthetic dataset.}
    \end{subfigure}
    \caption{\label{fig:fig_sqrt_xai} XAI techniques plots of cubic root synthetic dataset.}
\end{figure*}

\subsubsection{Cubic Root}
\label{subsubsec:sqrt}
In this case, the output follows the next expression:

\begin{equation}
\label{eq_syn_squareroot}
    Y = (X_1)^2 + 2 \cdot X_2 + \frac{1}{10} \cdot \sqrt[3]{X_3}\, .
\end{equation}

From figure \ref{fig:fig_der_3d_sqrt}, we can conclude that $X_2$ has a linear relationship with $Y$ as $\frac{\partial Y}{\partial X_2}$ is constant and different from zero for all samples. $X_1$ and $X_3$ have a non-linear relationship with $Y$, as $\frac{\partial Y}{\partial X_1}$ and $\frac{\partial Y}{\partial X_3}$ are not constant for all samples. Furthermore, figure \ref{fig:fig_derx3_sqrt} shows that $\frac{\partial Y}{\partial X_3} = 0$ for most samples, except for the samples where $X_3$ is close to $0$. In these samples, sensitivities with respect of $X_3$ are far higher than with respect to the other input variables, so changes of $X_3$ in this region of the input space shall provoke large changes on $Y$. This can be understood as a local importance of $X_3$, and it shall be detected by XAI methods.

Results of XAI analysis performed on equation \ref{eq_syn_squareroot} are presented in figure \ref{fig:fig_sqrt_xai}. 

Figure \ref{fig:fig_sqrt_sens_plots} shows the sensitivity plots as introduced in \cite{pizarroso_2022}. First plot shows two sensitivity metrics: mean (x-axis) and standard deviation (y-axis). Second plot of figure \ref{fig:fig_sqrt_sens_plots} shows the mean squared sensitivity for each of the input variables, which could be used as a variable importance metric. A broader explanation of these metrics can be found in section \ref{section:sota}. According to this metrics, the following information can be retrieved from figure \ref{fig:fig_sqrt_sens_plots}:

\begin{itemize}
    \item $X_2$ variable has a linear relationship with the output.
    \item $X_1$ variable has a non-linear relationship with the output. 
    \item $X_3$ is almost irrelevant to predict the output, with much lower importance than $X_1$ and $X_2$, but greater than $X_4-X_8$.
    \item The remaining variables have no relationship with the output.
\end{itemize}

The same information is obtained using SHAP from figures \ref{fig:fig_sqrt_shap_values} and \ref{fig:fig_sqrt_shap_importance}. In figure \ref{fig:fig_sqrt_shap_values}, linear relationship of $Y$ and input $X_2$ can be seen in the perfect correlation between the values of $X_2$ and its impact on $Y$. Non-linear relationships of $Y$ and inputs $X_1$ is also easily detected, as there is no correlation between the values of $X_1$ and its impact on $Y$. Figure \ref{fig:fig_sqrt_shap_importance} is analogous to second plot of figure \ref{fig:fig_sqrt_sens_plots}, but showing mean of the absolute SHAP values instead of mean squared sensitivity as variable importance measure. 

Figure \ref{fig:fig_sqrt_input_permutation} shows the importance metrics assigned to the input variable with the input permutation technique. These metrics are almost identical to SHAP importance metrics presented in figure \ref{fig:fig_sqrt_shap_importance}, with similar relative importances between input variables. This technique does not provide information about the type of relationship between output and input variables, but it is notably less computationally expensive than the others. 

Using the $\alpha-$curves methodology described in section \ref{section:methodology-of-alpha-curves}, previously obtained information can be obtained from figure \ref{fig:fig_sqrt_alpha_curves}. However, it also shows that, apart from the non-linearities presented in $X_1$ and $X_3$, there are regions where output $Y$ is far more sensitive to $X_3$ than to $X_2$. In fact, peak sensitivities in some samples are detected, as can be seen by the rapid increase of $\op{ms}^{\alpha}_{\SX,3}(f)$ for $\alpha > 4$. This information could not be retrieved using the other two methods, which assigned little importance to $X_3$, due to the aggregation techniques used to calculate importance of the input variables.

\begin{figure*}
    \centering
    \begin{subfigure}{.45\linewidth}
        \centering
        \includegraphics[width=\linewidth, height=\linewidth]{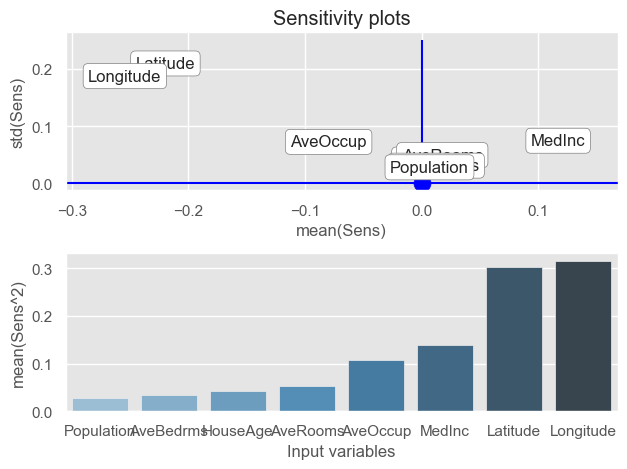}
        \caption{\label{fig:fig_ames_sens_plots} Sensitivity plots of california housing dataset.}
        \end{subfigure}
    \begin{subfigure}{.45\linewidth}
        \centering
        \includegraphics[width=\linewidth, height=\linewidth]{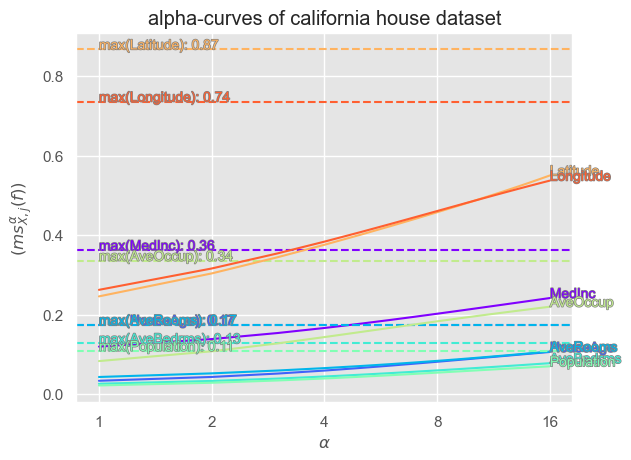}
        \caption{\label{fig:fig_ames_alpha_curves} $\alpha$-curves of california housing dataset.}
    \end{subfigure}
    \begin{subfigure}{.45\linewidth}
        \centering
        \includegraphics[width=\linewidth, height=0.7\linewidth]{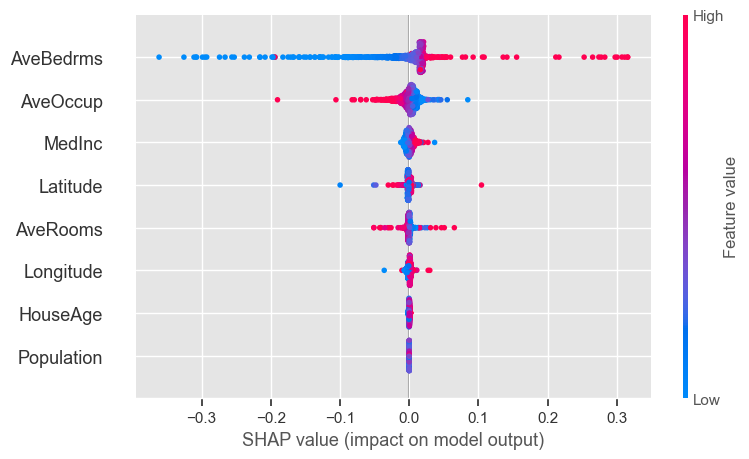}
        \caption{\label{fig:fig_ames_shap_values} SHAP values of california housing dataset.}
    \end{subfigure}
    \begin{subfigure}{.45\linewidth}
        \centering
        \includegraphics[width=\linewidth, height=0.7\linewidth]{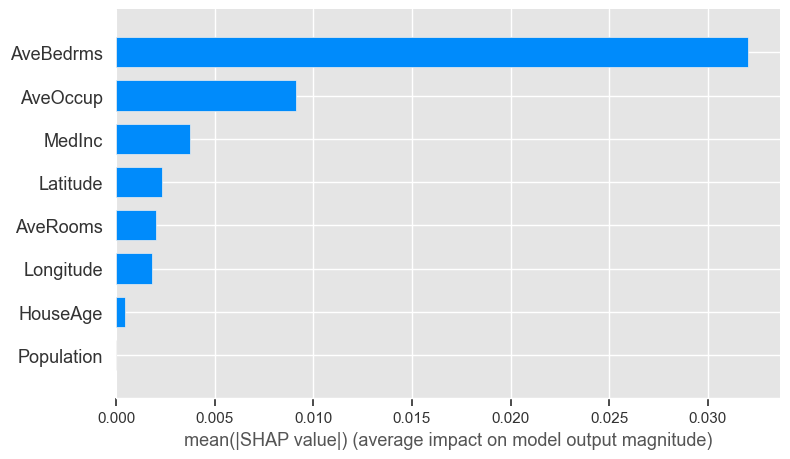}
        \caption{\label{fig:fig_ames_shap_importance} SHAP Importance values of california housing dataset.}
    \end{subfigure}
    \begin{subfigure}{.45\linewidth}
        \centering
        \includegraphics[width=\linewidth, height=0.8\linewidth]{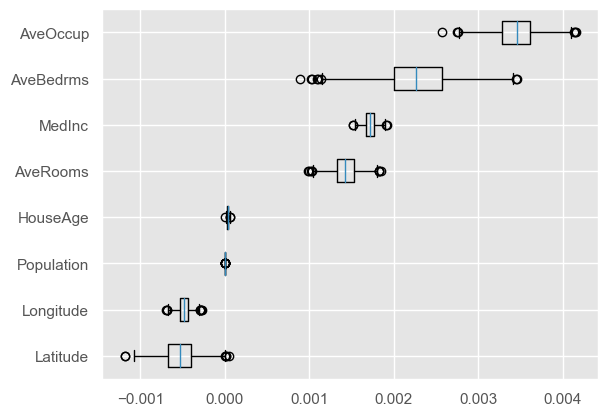}
        \caption{\label{fig:fig_ames_input_permutation} Input permutation importance values of california housing dataset.}
    \end{subfigure}
    \caption{\label{fig:fig_ames_xai} XAI techniques plots of california housing dataset. Techniques used are sensitivity analysis based on partial derivatives and SHAP.}
\end{figure*}

\subsection{Real datasets}

Based on the analysis performed in the previous section, similar analysis can be performed on datasets from real sources. In this section, sensitivity analysis are performed on the California housing \cite{california_2022} and Diabetes \cite{diabetes_sklearn} regression datasets. As the ground-truth of the relationship between input and output variables are not known, a Multi-Layer Perceptron (MLP) model with one hidden layer is trained to predict the output of each dataset. To facilitate optimization (and without any loss of generality), we standardize all features independently for all datasets. Target variables are rescaled such that they are between zero and one.  

The first partial derivatives of the MLP model are calculated using the $neuralsens$ package \cite{pizarroso_2022}. It must be noted that the $\alpha-$curves methodology are directly applied to the calculated partial derivatives, so the user-preferred package to calculate partial derivatives of a ML model can be used. The homonym package $SHAP$ \cite{Lundberg_2017} is used to calculate and analyze the SHAP values for each experiment. 

\subsubsection{California Housing}
This dataset was derived from the 1990 U.S. census, using one row per census block group. A block group is the smallest geographical unit for which the U.S. Census Bureau publishes sample data (a block group typically has a population of 600 to 3,000 people). The target variable is the median house value for California districts, expressed in hundreds of thousands of dollars (\$100,000) \cite{Kelley_1997}.

This dataset is composed of the following variables:
\begin{itemize}
    \item MedInc: median income in block group
    \item HouseAge: median house age in block group
    \item AveRooms: average number of rooms per household
    \item AveBedrms: average number of bedrooms per household
    \item Population: block group population
    \item AveOccup: average number of household members
    \item Latitude: block group latitude
    \item Longitude: block group longitude
    \item MedHouseVal: median price of block group
\end{itemize}

Following the methodology for sensitivity analysis, figure \ref{fig:fig_ames_sens_plots} shows that longitude and latitude of the block group are the most important variables with a highly non-linear relationship with the output, followed by the median income of the families in block group and the average number of household members with a more linear relationship. Rest of the variables have almost no relationship with the output and may be discarded from the model. This information seems intuitive, as the location of the house is usually one of the key variables to determine the value of a house. Also, number and type of neighbours in the area are usually a good indicator of the economy of the block, where fewer people with higher income might indicate exclusive villas and more people with lower income might indicate residential blocks. Rest of the variables correlates with what we can expect to influence the price of the house, this is, size of the house ($AveRooms$) and how up-to-date house features are ($HouseAge$) is more important than population of the block. One might object that $AveBedrms$ also correlates with the size of the house and shall be assigned a higher importance. However, size of the house can be determined with $AveRooms$, so the information provided by $AveBedrms$ may only be used to distinguish between same size houses with different number of bedrooms. This information might not influence house price as much as the house size, so consequently the $AveBedrms$ variable is not as important as $AveRooms$. 

More information can be retrieved using the $\alpha-$curves methodology. The location of the house is still the most important information to predict the price, but figure \ref{fig:fig_ames_alpha_curves} shows that, although $Latitude$ have a lower mean sensitivity, there are regions of the input dataset where the $Latitude$ sensitivity is the highest. This might indicate a region where a change in latitude mean changing between two blocks with great price differences, maybe between first and second beach line blocks. Rest of the variables shows similar information than the retrieved from figure \ref{fig:fig_ames_sens_plots}, although it can be seen than the maximum sensitivity of $HouseAge$ is similar to the maximum sensitivity of $AveRooms$. This might indicate that, although the mean effect of the $HouseAge$ is not as important as the $AveRooms$ variable, the house age might influence the price of the house as much as the size if the house. This makes sense, as an older house usually needs house renovations and this might decrease the price.

\begin{figure}[t!]
    \centering
    \includegraphics[width=1\linewidth, height = 0.8\linewidth]{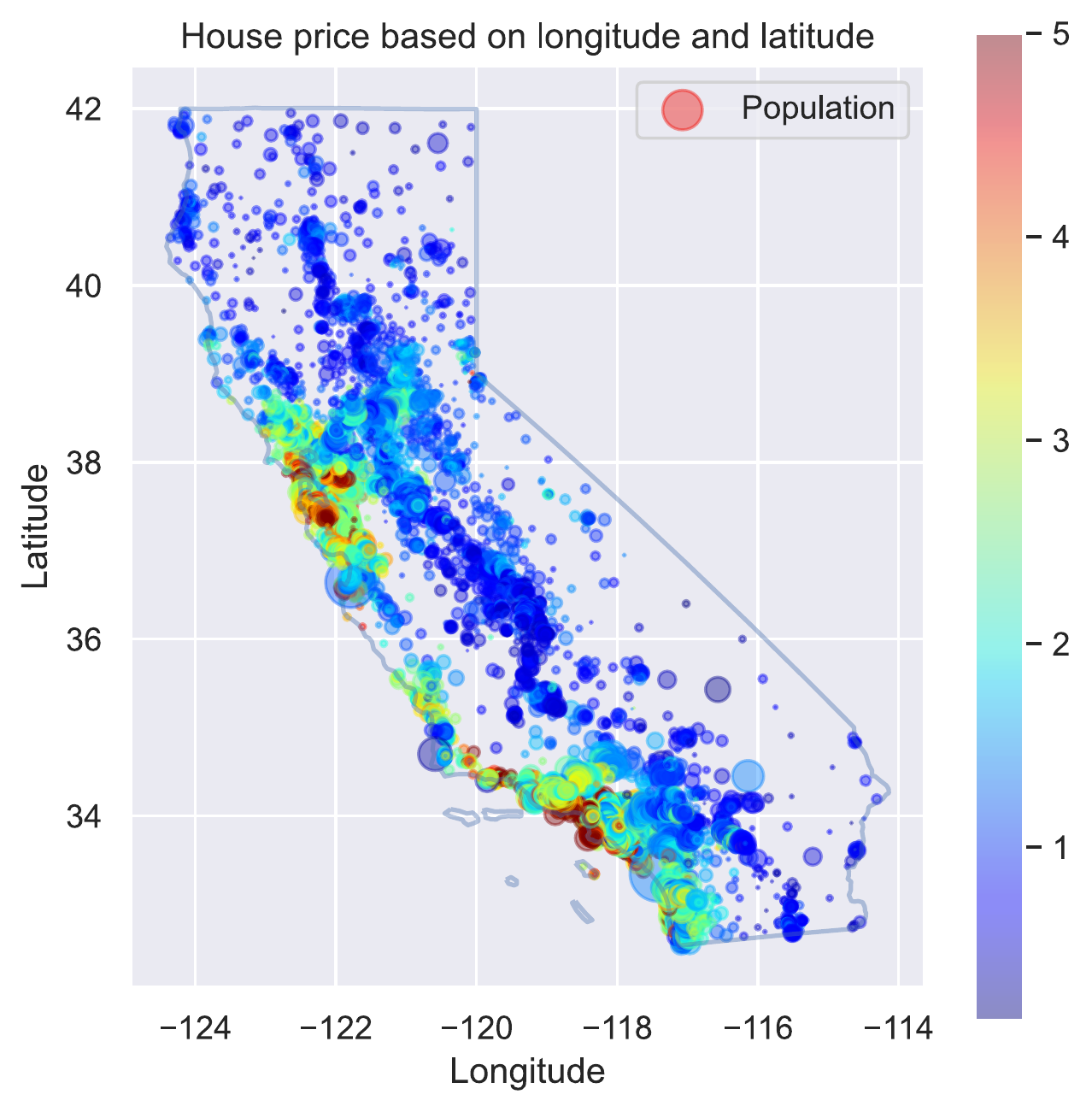}
    \caption{\label{fig:fig_heatmap_ames} Heatmap of median house price plotted in latitude and longitude. Color is given by median house price in block group, size is given by block group population.}
\end{figure}

It must be noted that it was not computationally feasible to analyze all the samples of the dataset using SHAP, so only 1000 random samples were analyzed. Figure \ref{fig:fig_ames_shap_values} and \ref{fig:fig_ames_shap_importance} shows that the most important variable is the average number of bedrooms per household followed by the average number of household members. Relationship between these inputs and the output seems linear, where a large number of bedrooms and fewer households corresponds to a higher house price. This correlates with the idea of luxury villas and residential blocks stated earlier. However, it appears counter-intuitive that the location of the block given by the $Latitude$ and $Longitude$ variable is barely important compared with the rest of variables. 

Considering the input permutation importances shown in figure \ref{fig:fig_ames_input_permutation}, it can be seen how the importances assigned to the input variables vary depending on the permutation performed on the variable. In this case, age of the house $HouseAge$ and population of the block $Population$ does not influence on the performance of the model (importance of these variables being zero is related to no change in the error metric when this variables are permutted), coinciding with the rest of the XAI techniques. House size occupation related variables are the most important according to this technique, assigning the lowest importances to the location related variables ($Longitude$ and $Latitude$ variables). Moreover, this technique assigns a negative importance to these variables, implying that permutting $Longitude$ and $Latitude$ results in a more accurate model. Again, this seems counter intuitive and misleading, and may be the result of how the variables were permutted.  

Figure \ref{fig:fig_heatmap_ames} shows the distribution of prices in California state, where we can see that based on latitude and longitude great differences in house price can be distinguished. As expected, the highest house prices are in the population centers on the beachfront (in this case, San Francisco and Los Angeles). This corroborates the information provided by the $\alpha-$curves method, where location related variables are the most important to predict house prices, and makes us doubt the explanation provided by SHAP.

\begin{figure*}
    \centering
    \begin{subfigure}{.45\linewidth}
        \centering
        \includegraphics[width=\linewidth, height=\linewidth]{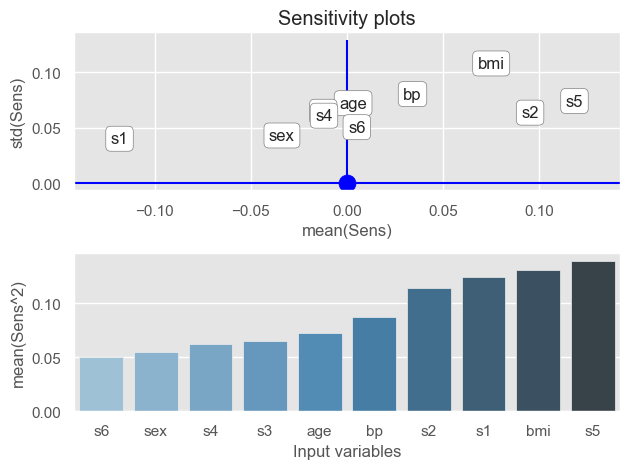}
        \caption{\label{fig:fig_diabetes_sens_plots} Sensitivity plots of diabetes dataset.}
        \end{subfigure}
    \begin{subfigure}{.45\linewidth}
        \centering
        \includegraphics[width=\linewidth, height=\linewidth]{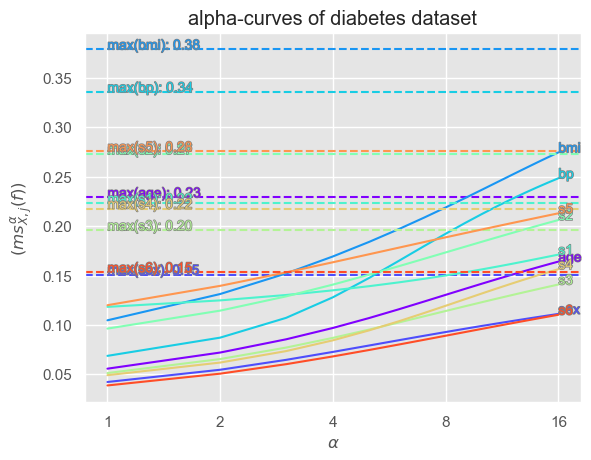}
        \caption{\label{fig:fig_diabetes_alpha_curves} $\alpha$-curves of diabetes dataset.}
    \end{subfigure}
    \begin{subfigure}{.45\linewidth}
        \centering
        \includegraphics[width=\linewidth, height=0.7\linewidth]{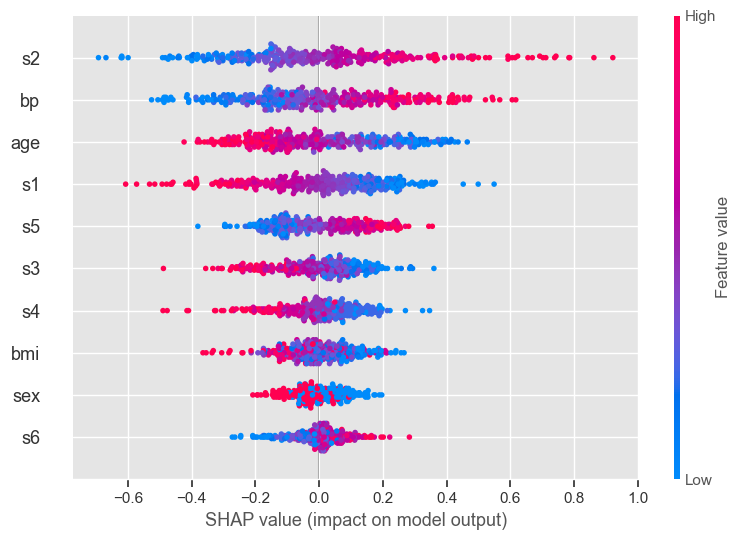}
        \caption{\label{fig:fig_diabetes_shap_values} SHAP values of diabetes dataset.}
    \end{subfigure}
    \begin{subfigure}{.45\linewidth}
        \centering
        \includegraphics[width=\linewidth, height=0.7\linewidth]{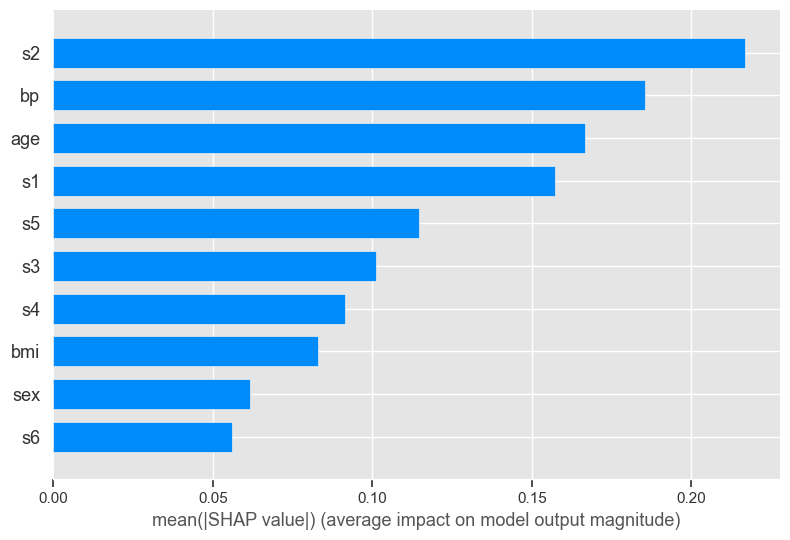}
        \caption{\label{fig:fig_diabetes_shap_importance} SHAP Importance values of diabetes dataset.}
    \end{subfigure}
    \begin{subfigure}{.45\linewidth}
        \centering
        \includegraphics[width=\linewidth, height=0.8\linewidth]{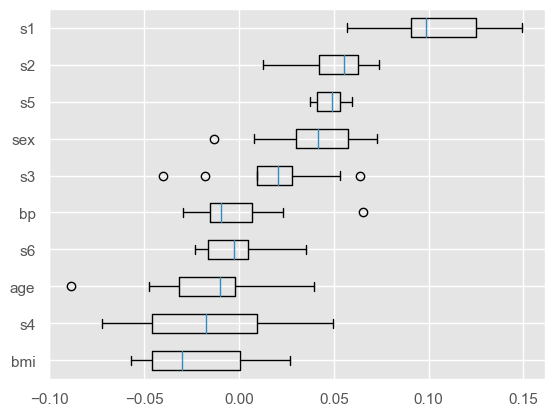}
        \caption{\label{fig:fig_diabetes_input_permutation} Input permutation importance values of diabetes dataset.}
    \end{subfigure}
    \caption{\label{fig:fig_diabetes_xai} XAI techniques plots of diabetes dataset. Techniques used are sensitivity analysis based on partial derivatives and SHAP.}
\end{figure*}

\subsubsection{Diabetes}
This dataset contains medical information of 442 diabetes patients with a quantitative measure of disease progression one year after baseline. It is composed of the following variables:
\begin{itemize}
    \item age: age in years
    \item sex
    \item bmi: body mass index
    \item bp: average blood pressure
    \item s1: tc, total serum cholesterol
    \item s2: ldl, low-density lipoproteins (LDL) cholesterol
    \item s3: hdl, high-density lipoproteins (HDL) cholesterol
    \item s4: tch, total cholesterol 
    \item s5: ltg, log of serum triglycerides level
    \item s6: glu, blood sugar level
\end{itemize}

Following the methodology for sensitivity analysis, figure \ref{fig:fig_diabetes_sens_plots} indicates that all variables have a non-linear relationship with the output, being the most important variables the serum triglycerides level ($s5$) and the body-mass index of the patient. According to literature \cite{tryg_diabetes_2021, tryg_diabetes_2022}, high triglycerides indicates an increase in cells insulin resistance, one of the causes of type 2 diabetes. Obesity (related to $bmi$) has been identified to be one of the main causes of diabetes \cite{Hjellvik_2012, gray_picone_sloan_yashkin_2015}. In \cite{Wang2021-ha} and \cite{Jiang2021-cg}, an index combination of both variables were an accurate predictor of high-risk groups of pre-diabetes. Regarding total serum cholesterol ($s1$), it seems counter-intuitive that a higher level of cholesterol indicates a fewer risk of diabetes (mean of sensitivities with respect to $s1$ is negative). Several studies have researched the cholesterol relationship to the probabilities of diabetes of a patient \cite{diabetes_1, diabetes_4, diabetes_6, diabetes_2, diabetes_3}. These studies determined that, although cholesterol has a relationship with the probabilities of a person developing diabetes, it is the LDL cholesterol level ($s2$) which might cause diabetes. It has also been determined that diabetes could cause high blood-pressure ($bp$) \cite{bpdiabetes_2022} and probabilities of suffering diabetes increase with the $age$ of the patient \cite{helmer_2022}. Respect to HDL and total cholesterol levels ($s3$ and $s4$), their low importance might be caused due to the information provided by these variables being sum up by $s1$ and $s2$ variables, and being used only to determine diabetes progression for patients with different pattern behavior for the variables $s1$ and $s2$. There can not be found notable differences in diabetes symptoms and causes on the patient based on gender ($sex$) in the consulted literature \cite{diabetes_7, diabetes_8}, and, although counter intuitive, blood sugar level was found the least important predictor in previous diabetes detection studies \cite{Hjellvik_2012, Wang2021-ha}.

Although results from sensitivity analysis are coherent with the literature reviewed, more information can be retrieved using the $\alpha-$curves methodology presented in this paper. Figure \ref{fig:fig_diabetes_alpha_curves} shows that $bmi$ is the most important variable for every alpha, and triglycerides level ($s5$) is one of the three most important variables for all alphas. However, there are regions of the dataset where the output is far more sensitive to the $bp$ variable, evolving from being the fifth most important to the second. A similar conclusion can be drawn for the LDL cholesterol level ($s2$), whose maximum sensitivity is similar to the maximum sensitivity of $s5$. The opposite happens to the total cholesterol ($s1$). Although mean sensitivity of $s1$ is notably high, its $\alpha-$curve barely rise, being the most linear output-input relationship for this dataset. Moreover, the probability of suffering diabetes is more sensitive to a change in the $age$ variable than in the $s1$ variable, as the maximum sensitivity of the former is greater than the maximum sensitivity of the latter. Based on this analysis, it might be concluded that there are some patients with remarkably different relationship between the output and input variables, where blood pressure and LDL cholesterol level play more important roles to predict the possibility of suffering diabetes than the triglycerides level, one of the main input variables used in the literature. Characterizing these patients could produce a more reliable model than the state-of-the-art.  

Analyzing SHAP results presented in figures \ref{fig:fig_diabetes_shap_values} and \ref{fig:fig_diabetes_shap_importance}, it can be observed that $s2$, $bp$, $age$ and $s1$ variables have a notably higher importance than the rest of the variables. The fact that not the $bmi$ nor the $s5$ variables are among the most important variables rises questions about the information retrieved based on the literature consulted. As SHAP assumes that the relationship between inputs and output is independent (and linear for each sample), interactions between these two variables are omitted and might be the cause of the low assigned importance to $bmi$ and $s5$. 

Figure \ref{fig:fig_diabetes_input_permutation} shows the input permutation results, which provide no conclusive order of importance along the variables. Depending on the permutation iteration analyzed, order of importance varies greatly and little information about the model can be retrieved. 

Based on the information retrieved from the previous methods (ignoring information from input permutation), the only common information is that, although counter-intuitive, the least important variable is the blood sugar level ($s6$), followed by the gender of the patient. Importance assigned for the rest of the variables depends on the method used to analyze the model, being sensitivity analysis and the $\alpha-$curves method the most coherent with the reviewed literature.

\section{Conclusions and future work}
\label{section:conclusions}
In this paper, we propose a novel XAI method to interpret ML models based on a metric interpretation of partial derivatives. Given a fitted ML model, the sensitivities of the output variable with respect to the inputs provide a relevant measure of the significance of the features in the problem analyzed. However, obtaining a meaningful metric to quantify feature importance based on the sensitivities remained an open issue.

This paper presents one major advantage with respect to previous works in this field. It provides theoretical proof and practical use of the $\alpha-$curves methodology, a novel technique to obtain feature importance information by aggregating the effect of sensitivities across the whole input space. 

 In this paper, the $\alpha-$curves methodology is applied to the analysis of Neural Networks. Nevertheless, it can be applied to any model whose partial derivatives can be calculated, allowing for a higher flexibility when facing a ML problem. 

The comparative analysis performed against other commonly used XAI methods (SHAP and input permutation) in synthetic and real datasets sharply demonstrate the effectiveness of the method to detect relevant features in the datasets. On the one hand, it provides information about the type of relationship between inputs and the output (linear, nonlinear). On the other hand, the method can detect if there exists regions in the input space where certain variables have a greater effect on the output than others. Detecting these regions is crucial, for example, to avoid removing features considered irrelevant when they only have a strong impact on a reduced area of the input space.

Finally, it is worth remarking that the proposed theoretical framework and the $\alpha$-curves methodology can be applied to both numerical and categorical features, provided that the data model embeds those variables in some $\mathbb{R}^n$. For instance, these methods can be used for studying neural networks trained over a dataset containing discrete variables, as they analyse the continuous model (the neural network, which receives real inputs) and not the dataset itself, whose categorical variables are embedded in the real inputs of the network. Nonetheless, using this theoretical framework as a starting point, more intrinsic methodologies for explaining categorical inputs or outputs could be examined in detail in future works by replacing the $L^p$ metrics by other suitable information metrics.

\section*{Annex I: Proof of the main theorem}
\label{annex:proofs}
This section contains the full mathematical proof of the main theorem \ref{thm:mainTheorem}.

Let us start the analysis with some useful notations. Given $\bar{h}=(h_1,\ldots,h_N)\in \mathbb{R}^N$ and $X=\{\bar{x}_i\}_{i=1}^n$, denote
$$\Delta_{\bar{h}} \bar{x}_i = (0,\ldots, \stackrel{j}{h_i},\ldots,0)\in \mathbb{R}^n$$
so that for each $i$
$$(x_{i,1},\ldots,x_{i,j}+h_{i},\ldots, x_{i,n})=\bar{x}_i+\Delta_{\bar{h}} \bar{x}_i.$$
Recall that $d_jf$ denotes the differential of $f$ with respect to variable $X_j$, i.e., if $f=(f_1,\ldots,f_m)$, then
$$d_jf= \left( \frac{\partial f_1}{\partial X_j} dX_j, \ldots, \frac{\partial f_m}{\partial X_j} dX_j\right).$$
Finally, let $\SD_{\SX,j}f:\mathbb{R}^N \longrightarrow \mathbb{R}^{Nm}$ be the linear operator
$$\SD_{\SX,j}=\bigoplus_{i=1}^N d_jf(\bar{x}_i).$$
It is then the operator that for each $\bar{h}\in \mathbb{R}^N$ yields
$$\SD_{\SX,j}f(\bar{h}):=\left(d_jf(\bar{x}_1)h_1,\ldots, d_jf(\bar{x}_N)h_N\right).$$
We can state the following estimate. Recall that given a linear map between finite dimensional normed spaces $A:(E,\lVert-\rVert_E)\longrightarrow (F,\lVert-\rVert_F)$ we define the norm of $A$ with respect to the norms of $E$ and $F$ as
$$\lVert A \rVert_{\lVert-\rVert_E, \lVert-\rVert_F}=\max_{\bar{x}\ne 0} \frac{\lVert A\bar{x} \rVert_F}{\lVert \bar{x} \rVert_E}.$$

\begin{theorem}
\label{thm:mainEstimate}
Let $f$ be a $\SC^2$ function. Then
$$s_{\SX,j}(f)= \lVert \SD_{\SX,j}f \rVert_{\lVert - \rVert_H, \lVert - \rVert_Y}.$$
\end{theorem}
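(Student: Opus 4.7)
The plan is to reduce the problem to the first-order Taylor expansion of $f$ along the $j$-th coordinate at each sample point, so that the difference quotient defining $s_{\SX,j}(f)$ is controlled by the linear operator $\SD_{\SX,j}f$ up to a quadratic remainder which vanishes in the limit. First I would introduce, for each $i=1,\ldots,N$, the Taylor expansion
\begin{equation*}
f(\bar{x}_i+\Delta_{\bar{h}}\bar{x}_i)-f(\bar{x}_i)=d_jf(\bar{x}_i)h_i + r_i(h_i),
\end{equation*}
where, because $f$ is $\SC^2$, there exists a neighborhood $U_i$ of $\bar{x}_i$ and a constant $C_i>0$ such that $\lVert r_i(h_i)\rVert\le C_i h_i^2$ uniformly for all $h_i$ with $\bar{x}_i+\Delta_{\bar{h}}\bar{x}_i\in U_i$. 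Packaging the equations for $i=1,\ldots,N$ together, the vector of differences appearing in $v_{\SX,j}(f,\bar{h})$ becomes exactly $\SD_{\SX,j}f(\bar{h})+R(\bar{h})$, with $R(\bar{h})=(r_1(h_1),\ldots,r_N(h_N))$.

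Next I would bound the remainder uniformly. Since all norms on the finite dimensional spaces $H=\mathbb{R}^N$ and $Y\cong\mathbb{R}^{mN}$ are equivalent, there exist constants $K_1,K_2>0$ such that $|h_i|\le K_1\lVert\bar{h}\rVert_H$ for every coordinate and $\lVert R(\bar{h})\rVert_Y\le K_2\max_i\lVert r_i(h_i)\rVert$. Combining with the componentwise Taylor estimate gives a uniform bound
\begin{equation*}
\lVert R(\bar{h})\rVert_Y\le C\lVert\bar{h}\rVert_H^{\,2}
\end{equation*}
for some $C>0$ and all $\bar{h}$ with $\lVert\bar{h}\rVert_H$ sufficiently small. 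The triangle inequality then yields
\begin{equation*}
\bigl|v_{\SX,j}(f,\bar{h})-\lVert \SD_{\SX,j}f(\bar{h})\rVert_Y\bigr|\le C\lVert\bar{h}\rVert_H^{\,2}.
\end{equation*}

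At this point I would take the supremum over $\lVert\bar{h}\rVert_H=\varepsilon$. Using linearity of $\SD_{\SX,j}f$ and the definition of the operator norm, we obtain
\begin{equation*}
\sup_{\lVert\bar{h}\rVert_H=\varepsilon}\lVert \SD_{\SX,j}f(\bar{h})\rVert_Y=\varepsilon\,\lVert \SD_{\SX,j}f\rVert_{\lVert-\rVert_H,\lVert-\rVert_Y},
\end{equation*}
and the supremum is attained since the sphere $\{\lVert\bar{h}\rVert_H=\varepsilon\}$ is compact. Combining this with the remainder bound gives
\begin{equation*}
\varepsilon\lVert \SD_{\SX,j}f\rVert-C\varepsilon^2\le\sup_{\lVert\bar{h}\rVert_H=\varepsilon}v_{\SX,j}(f,\bar{h})\le\varepsilon\lVert \SD_{\SX,j}f\rVert+C\varepsilon^2.
\end{equation*}
Dividing by $\varepsilon$ and letting $\varepsilon\to 0$ yields $s_{\SX,j}(f)=\lVert \SD_{\SX,j}f\rVert_{\lVert-\rVert_H,\lVert-\rVert_Y}$.

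The only delicate step is the uniform control of the quadratic remainder across all $N$ sample points: one must ensure that the second-order Taylor constant can be chosen independently of $i$ and of the particular perturbation $\bar{h}$ (with small $\lVert\bar{h}\rVert_H$), which follows from the $\SC^2$ hypothesis together with the fact that the dataset $\SX$ is a finite set and the norm equivalence in finite dimension. Once this uniform bound is in place, the rest is the standard identification of a linearization with the operator norm of the differential.
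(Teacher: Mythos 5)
Your proposal is correct and follows essentially the same route as the paper's proof: first-order Taylor expansion at each sample point, identification of the linear part with $\SD_{\SX,j}f(\bar{h})$, a triangle-inequality sandwich, and the homogeneity of the operator norm to extract $\varepsilon\lVert \SD_{\SX,j}f\rVert$ before taking the limit. The only difference is cosmetic: you use the $\SC^2$ hypothesis to get an explicit uniform quadratic bound on the remainder, whereas the paper works with the $o(\lVert\bar h\rVert)$ estimate and norm equivalence to kill the remainder term in the limit.
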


\begin{proof}
By Taylor's Theorem, at each $\bar{x}_i\in \SX$
$$f(\bar{x}_i+\Delta_{\bar{h}}\bar{x}_i)-f(\bar{x}_i)= d_jf(\bar{x}_i) h_i+ r_i(h_i),$$
where $r_i(h_i)=o(|h_i|)$. Thus, we have
\begin{multline*}
    v_{\SX,j}(f,\bar{h}) = \lVert (d_jf(\bar{x}_i) h_i)_{i=1}^N + (r(h_i))_{i=1}^N \rVert_Y \\ 
    =\lVert \SD_{\SX,j}f(\bar{h})+r(\bar{h}) \rVert_Y,
\end{multline*}
where $r(\bar{h})=(r_1(h_1),\ldots,r_N(h_N))$. By triangular inequality, we have
\begin{multline*}
\sup_{\lVert \bar{h} \rVert=\varepsilon} \lVert \SD_{\SX,j}f(\bar{h}) \rVert_Y -\sup_{\lVert \bar{h} \rVert=\varepsilon} \lVert r(\bar{h}) \rVert_Y \le \sup_{\lVert \bar{h} \rVert=\varepsilon} v_{\SX,j}(f,\bar{h})\\
\le \sup_{\lVert \bar{h} \rVert=\varepsilon} \lVert \SD_{\SX,j}f(\bar{h}) \rVert_Y + \sup_{\lVert \bar{h} \rVert=\varepsilon} \lVert r(\bar{h}) \rVert_Y
\end{multline*}
so
\begin{multline*}
\lim_{\varepsilon \to 0}\frac{\sup_{\lVert \bar{h} \rVert=\varepsilon} \lVert \SD_{\SX,j}f(\bar{h}) \rVert_Y -\sup_{\lVert \bar{h} \rVert=\varepsilon} \lVert r(\bar{h}) \rVert_Y}{\varepsilon} \le  s_{\SX,j}(f)\\
\le \lim_{\varepsilon\to 0} \frac{\sup_{\lVert \bar{h} \rVert=\varepsilon} \lVert \SD_{\SX,j}f(\bar{h}) \rVert_Y + \sup_{\lVert \bar{h} \rVert=\varepsilon} \lVert r(\bar{h}) \rVert_Y}{\varepsilon}.
\end{multline*}
Observe that, by linearity of $\SD_{\SX,j}f(\bar{h})$,
$$\sup_{\lVert \bar{h} \rVert=\varepsilon} \lVert \SD_{\SX,j}f(\bar{h}) \rVert_Y = \lVert \SD_{\SX,j}f\rVert_{\lVert - \rVert_H,\lVert - \rVert_Y} \varepsilon.$$
On the other hand, as all norms on $Y=\mathbb{R}^{mN}$ are equivalent, we have that $r(\bar{h})=o(\lVert \bar{h} \rVert_Y)$, so
$$\lim_{\varepsilon \to 0} \frac{\sup_{\lVert \bar{h} \rVert=\varepsilon} \lVert r(\bar{h}) \rVert_Y}{\varepsilon}=0$$
and, therefore, we obtain that
\begin{multline*}
\lim_{\varepsilon \to 0} \frac{\sup_{\lVert \bar{h} \rVert=\varepsilon} \lVert \SD_{\SX,j}f(\bar{h}) \rVert_Y -\sup_{\lVert \bar{h} \rVert=\varepsilon} \lVert r(\bar{h}) \rVert_Y }{\varepsilon}\\
 = \lim_{\varepsilon \to 0} \frac{\sup_{\lVert \bar{h} \rVert=\varepsilon} \lVert \SD_{\SX,j}f(\bar{h}) \rVert_Y +\sup_{\lVert \bar{h} \rVert=\varepsilon} \lVert r(\bar{h}) \rVert_Y }{\varepsilon} \\ =\lVert \SD_{\SX,j}f\rVert_{\lVert - \rVert_H,\lVert - \rVert_Y}
\end{multline*}
and the Theorem follows.
\end{proof}

Using this operator norm framework allows us to analyze explicitly the cases where the norms $\lVert - \rVert_H$ and $\lVert - \rVert_Y$ are $L^p$-norms and to prove the main theorem \ref{thm:mainTheorem}. We will split the proof depending on which norm corresponds to an $L^p$ norm with the highest value of $p$.

From this point on, we will assume that $\lVert - \rVert_H$ is an $L^p$-norm and that $\lVert - \rVert_Y$ is an $L^q$ norm.

\subsection*{Proof of the main theorem when $p=q$}
\begin{theorem}
\label{thm:mainTheoremp=q}
Let $f$ be a $\SC^2$ function. If $\lVert - \rVert_H$ and $\lVert - \rVert_Y$  are the $L^p$ norms, then
$$s_{\SX,j}(f)=\max_{i}\left \{ \left \lVert d_jf(\bar{x}_i) \right \rVert_p \right\}.$$
\end{theorem}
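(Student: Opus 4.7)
The plan is to invoke Theorem \ref{thm:mainEstimate} to reduce the problem to computing the operator norm $\lVert \SD_{\SX,j}f \rVert_{\lVert - \rVert_p,\lVert - \rVert_p}$ of the block-diagonal linear map $\SD_{\SX,j}f:\mathbb{R}^N\to \mathbb{R}^{mN}$ that sends $\bar{h}=(h_1,\ldots,h_N)$ to $(d_jf(\bar{x}_1)h_1,\ldots,d_jf(\bar{x}_N)h_N)$. Once the problem is framed this way, the $p=q$ case is essentially a classical fact about operator norms of block-diagonal maps between $L^p$ spaces, and the proof is mostly bookkeeping with $L^p$ sums.

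First I would expand the target norm explicitly. Since the $i$-th block of $\SD_{\SX,j}f(\bar{h})$ equals $h_i\cdot d_jf(\bar{x}_i)$ and norms in $Y=\mathbb{R}^{mN}$ are taken with the $L^p$ norm, a direct computation gives
$$\lVert \SD_{\SX,j}f(\bar{h}) \rVert_p^p = \sum_{i=1}^N |h_i|^p \, \lVert d_jf(\bar{x}_i) \rVert_p^p.$$
Setting $M:=\max_i \lVert d_jf(\bar{x}_i) \rVert_p$, factoring out $M^p$ from each summand and comparing with $\lVert \bar{h}\rVert_p^p=\sum_i |h_i|^p$ immediately gives the upper bound $\lVert \SD_{\SX,j}f(\bar{h})\rVert_p \le M \lVert \bar{h}\rVert_p$, hence the operator norm is at most $M$.

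Next I would produce a witness that realises $M$. Let $i_0$ be an index achieving the maximum and take $\bar{h}=e_{i_0}$, the $i_0$-th standard basis vector of $\mathbb{R}^N$. Then $\lVert \bar{h}\rVert_p = 1$ and $\SD_{\SX,j}f(\bar{h})$ is zero in every block except block $i_0$, which equals $d_jf(\bar{x}_{i_0})$, so $\lVert \SD_{\SX,j}f(\bar{h}) \rVert_p = \lVert d_jf(\bar{x}_{i_0}) \rVert_p = M$. Combining this with the upper bound gives $\lVert \SD_{\SX,j}f\rVert_{\lVert - \rVert_p,\lVert - \rVert_p}=M$, and invoking Theorem \ref{thm:mainEstimate} finishes the proof.

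I do not expect any real obstacle here; the only subtle point is that this $p=q$ result does not need a Hölder-type argument (as the $p>q$ case will) because the block-diagonal structure of $\SD_{\SX,j}f$ together with the identity of the exponents in the domain and codomain allows the $|h_i|^p$ factors to be cleanly separated from the per-block gradient norms. The hardest step, conceptually, is simply recognising that the standard basis vector is the optimiser, but this is immediate once the $L^p$ sum is written in the factored form above.
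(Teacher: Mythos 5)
Your proof is correct and follows essentially the same route as the paper: both reduce to computing $\lVert \SD_{\SX,j}f \rVert_{\lVert-\rVert_p,\lVert-\rVert_p}$ via Theorem \ref{thm:mainEstimate} and then observe that the supremum over the unit $L^p$-sphere is attained at a coordinate vector. The paper phrases this last step as a linear program over the simplex after substituting $H_i=|h_i|^p$ (whose optimum sits at a vertex), while you give the equivalent direct two-sided bound by factoring out the maximum and exhibiting the standard basis vector as a witness; the content is identical.
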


\begin{proof}
By Theorem \ref{thm:mainEstimate}, we have
$$s_{\SX,j}(f)=\lVert \SD_{\SX,j}f \rVert_{p,p}.$$
Thus, we have to compute
$$\max_{\lVert \bar{h} \rVert_p=1} \lVert \SD_{\SX,j}f \rVert_p = \left( \sum_{i=1}^N \sum_{k=1}^m \left | \frac{\partial f}{\partial X_j}(\bar{x}_i) \right |^p |h_i|^p \right)^{1/p}$$
under the constraint $\sum_i |h_i|^p=1$. Changing the variable $H_i=|h_i|^p$ and rising to power $p$ the optimized function yields the following linear optimization problem.
$$\max_{\begin{array}{c}\sum_i H_i=1\\H_i\ge 0 \, \forall i\end{array}} \sum_{i=1}^N \left( \sum_{k=1}^m \left | \frac{\partial f}{\partial X_j}(\bar{x}_i) \right |^p \right) H_i$$
which is attained by taking $H_i$ to be $1$ when its coefficient is the biggest possible and $0$ in any other case. Thus
$$\max_{\begin{array}{c}\sum_i H_i=1\\H_i\ge 0 \, \forall i\end{array}} \sum_{i=1}^N \left( \sum_{k=1}^m \left | \frac{\partial f}{\partial X_j}(\bar{x}_i) \right |^p \right) H_i=\max_{i} \sum_{k=1}^m \left | \frac{\partial f}{\partial X_j}(\bar{x}_i) \right |^p$$
and, therefore,
\begin{multline*}
    \max_{\lVert \bar{h} \rVert_p=1} \lVert \SD_{\SX,j}f \rVert_p=\left( \max_{i} \sum_{k=1}^m \left | \frac{\partial f}{\partial X_j}(\bar{x}_i) \right |^p \right)^{1/p} \\ = \max_{i} \left(\sum_{k=1}^m \left | \frac{\partial f}{\partial X_j}(\bar{x}_i) \right |^p \right)^{1/p} = \max_{i} \lVert d_jf(\bar{x}_i) \rVert_p.
\end{multline*}
\end{proof}

\subsection*{Proof of the main theorem when $p>q$}
\begin{theorem}
\label{thm:mainTheoremp>q}
Let $f$ be a $\SC^2$ function. If $\lVert - \rVert_H=\lVert - \rVert_p$ and $\lVert - \rVert_Y=\lVert - \rVert_q$ with $p>q$, then
$$s_{\SX,j}(f)=\left (\sum_{i=1}^N \left( \sum_{k=1}^m \left |\frac{\partial f_k}{\partial X_j}(\bar{x}_i)\right |^q \right)^{\frac{p}{p-q}} \right )^{\frac{p-q}{pq}}.$$
\end{theorem}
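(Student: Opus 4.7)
The plan is to reduce the problem via Theorem \ref{thm:mainEstimate} to the computation of the operator norm $\lVert \SD_{\SX,j}f\rVert_{p,q}$, and then to evaluate this norm using Hölder's inequality with a carefully chosen pair of conjugate exponents. By Theorem \ref{thm:mainEstimate} we have $s_{\SX,j}(f)=\lVert \SD_{\SX,j}f\rVert_{p,q}$, so it suffices to maximize $\lVert \SD_{\SX,j}f(\bar h)\rVert_q$ over $\bar h$ with $\lVert \bar h\rVert_p=1$.

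First, I would expand the $q$-norm in the obvious way. Since $\SD_{\SX,j}f(\bar h)=(d_jf(\bar x_i)h_i)_{i=1}^N$, setting
$$a_i := \sum_{k=1}^m \left|\frac{\partial f_k}{\partial X_j}(\bar x_i)\right|^q,$$
we get $\lVert \SD_{\SX,j}f(\bar h)\rVert_q^{\,q}=\sum_{i=1}^N a_i\,|h_i|^q$. Hence the optimization becomes
$$\max\Bigl\{\,\sum_{i=1}^N a_i\,|h_i|^q \;\Big|\; \sum_{i=1}^N |h_i|^p = 1\,\Bigr\}.$$

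Next, I would apply Hölder's inequality to the pair of sequences $(a_i)_i$ and $(|h_i|^q)_i$. Since $p>q$, the exponents $\alpha:=p/q>1$ and $\beta:=p/(p-q)$ are conjugate, $1/\alpha+1/\beta=1$. Hölder then yields
$$\sum_{i=1}^N a_i\,|h_i|^q \;\le\; \Bigl(\sum_{i=1}^N |h_i|^{q\alpha}\Bigr)^{\!1/\alpha}\Bigl(\sum_{i=1}^N a_i^{\beta}\Bigr)^{\!1/\beta} = \Bigl(\sum_{i=1}^N |h_i|^{p}\Bigr)^{\!q/p}\Bigl(\sum_{i=1}^N a_i^{\,p/(p-q)}\Bigr)^{\!(p-q)/p}.$$
Under the constraint $\sum_i|h_i|^p=1$, this gives the upper bound $\bigl(\sum_i a_i^{\,p/(p-q)}\bigr)^{(p-q)/p}$ for $\lVert \SD_{\SX,j}f(\bar h)\rVert_q^{\,q}$.

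To finish I need to check that this bound is sharp, which is the only nontrivial point of the argument (and the place where an explicit Lagrange-multiplier candidate could be used as an alternative to Hölder). Equality in Hölder requires $|h_i|^{q\alpha}=|h_i|^p$ proportional to $a_i^\beta=a_i^{p/(p-q)}$. Choosing $|h_i|^p := a_i^{p/(p-q)}\bigl/\sum_\ell a_\ell^{p/(p-q)}$ (and any signs for the $h_i$) produces an admissible $\bar h$ for which equality holds, assuming not all $a_i$ vanish; if they do, both sides are zero and the formula is trivial. Taking $q$-th roots converts the bound on $\lVert\SD_{\SX,j}f(\bar h)\rVert_q^{\,q}$ into the asserted expression for $s_{\SX,j}(f)$, and substituting the definition of $a_i$ back in gives exactly
$$s_{\SX,j}(f)=\Bigl(\sum_{i=1}^N\Bigl(\sum_{k=1}^m\Bigl|\tfrac{\partial f_k}{\partial X_j}(\bar x_i)\Bigr|^q\Bigr)^{p/(p-q)}\Bigr)^{(p-q)/(pq)},$$
completing the proof. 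The main (mild) obstacle is simply bookkeeping the exponents: checking $q\alpha=p$ and $(p-q)/p\cdot 1/q=(p-q)/(pq)$ so that the outer $q$-th root assembles correctly into the stated formula.
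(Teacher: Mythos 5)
Your proposal is correct and follows essentially the same route as the paper's proof: reduce to the operator norm $\lVert \SD_{\SX,j}f\rVert_{p,q}$ via Theorem \ref{thm:mainEstimate}, then bound $\sum_i a_i|h_i|^q$ by Hölder's inequality with the conjugate pair $p/q$ and $p/(p-q)$ and exhibit the extremal $\bar h$ attaining equality. The only differences are cosmetic (the paper first substitutes $H_i=|h_i|^q$ before invoking Hölder, whereas you apply it directly), and your explicit handling of the degenerate case where all $a_i$ vanish is a small welcome addition.
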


\begin{proof}
As before, by Theorem \ref{thm:mainEstimate}, we have
$$s_{\SX,j}(f)=\lVert \SD_{\SX,j}f \rVert_{p,q}.$$
We have to compute
$$\lVert \SD_{\SX,j}f \rVert_{p,q}=\left( \max_{\lVert \bar{h}\rVert_p=1} \sum_{i=1}^N \left( \sum_{k=1}^m \left | \frac{\partial f_k}{\partial X_j}(\bar{x}_i) \right |^q\right) |h_i|^q \right)^{1/q}.$$
Changing the variable $H_i=|h_i|^q$, setting $\bar{H}=(H_1,\ldots,H_N)$ and rising to power $q$ the optimized function, we need to find the maximum of
$$\sum_{i=1}^N \left( \sum_{k=1}^m \left | \frac{\partial f_k}{\partial X_j}(\bar{x}_i) \right |^q\right) H_i$$
under the constraint $\lVert \bar{H}\rVert_{p/q}=1$. Taking $p'=\frac{p}{p-q}$ and $q'=p/q$ (so that $\frac{1}{p'}+\frac{1}{q'}=1$) and applying Hölder's Inequality with norms $L^{p'}$ and $L^{q'}$ yields
\begin{multline*}
    \sum_{i=1}^N \left( \sum_{k=1}^m \left | \frac{\partial f_k}{\partial X_j}(\bar{x}_i) \right |^q\right) H_i \le \left \lVert \left( \sum_{k=1}^m \left | \frac{\partial f_k}{\partial X_j}(\bar{x}_i) \right |^q \right)_{i=1}^N \right \rVert_{p'} \!\!\!\! \lVert \bar{H} \rVert_{q'} \\= \left \lVert \left( \sum_{k=1}^m \left | \frac{\partial f_k}{\partial X_j}(\bar{x}_i) \right |^q \right)_{i=1}^N \right \rVert_{p'}
\end{multline*}
with equality when $h_i \propto \left ( \sum_{k=1}^m \left | \frac{\partial f_k}{\partial X_j}(\bar{x}_i) \right |^q\right)^{p'/q'}$. Finally, taking the $q$-th root of this quantity yields
\begin{multline*}
    \lVert \SD_{\SX,j}f \rVert_{p,q} = \left ( \left \lVert \left( \sum_{k=1}^m \left | \frac{\partial f_k}{\partial X_j}(\bar{x}_i) \right |^q \right)_{i=1}^N \right \rVert_{p'}\right)^{1/q} \\ = \left (\sum_{i=1}^N \left (\sum_{k=1}^m \left | \frac{\partial f_k}{\partial X_j}(\bar{x}_i) \right |^q \right)^{p'} \right )^{q/p'}
\end{multline*}
obtaining the desired result.
\end{proof}

\subsection*{Proof of the main theorem when $p<q$}

\begin{theorem}
\label{thm:mainTheoremp<q}
Let $f$ be a $\SC^2$ function. If $\lVert - \rVert_H=\lVert - \rVert_p$ and $\lVert - \rVert_Y=\lVert - \rVert_q$ with $p<q$, then
$$s_{\SX,j}(f)=\max_{i}\left \{ \left \lVert d_jf(\bar{x}_i) \right \rVert_q \right\}.$$
\end{theorem}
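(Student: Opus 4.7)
The plan is to reduce the problem to an operator-norm computation via Theorem~\ref{thm:mainEstimate}, which identifies $s_{\SX,j}(f)$ with $\lVert \SD_{\SX,j}f \rVert_{p,q}$. Thus the task is to show that this operator norm equals $\max_i \lVert d_jf(\bar{x}_i) \rVert_q$ whenever $p<q$. I expect this case to be substantially simpler than the $p>q$ case (no Hölder argument is needed) because the inequalities between $\ell^p$-norms on $\mathbb{R}^N$ point in a favorable direction.

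For the upper bound, I would expand
$$\lVert \SD_{\SX,j}f(\bar{h}) \rVert_q^q = \sum_{i=1}^N \lVert d_jf(\bar{x}_i)\rVert_q^q \, |h_i|^q,$$
and set $M := \max_i \lVert d_jf(\bar{x}_i) \rVert_q$. Bounding each coefficient by $M^q$ gives $\lVert \SD_{\SX,j}f(\bar{h}) \rVert_q^q \le M^q \lVert \bar{h} \rVert_q^q$. Now I would invoke the standard monotonicity of counting-measure $\ell^r$-norms: for finite sequences in $\mathbb{R}^N$ and $p \le q$, one has $\lVert \bar{h} \rVert_q \le \lVert \bar{h} \rVert_p$. (Note this is the opposite direction from the probability-measure convention, and getting this direction right is the only real pitfall of the argument.) Under the constraint $\lVert \bar{h} \rVert_p = 1$, this yields $\lVert \SD_{\SX,j}f(\bar{h}) \rVert_q \le M$, so $\lVert \SD_{\SX,j}f \rVert_{p,q} \le M$.

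For the matching lower bound, I would test the operator on the standard basis vector $\bar{h} = e_{i^*}$, where $i^*$ is an index achieving $M$. Then $\lVert e_{i^*} \rVert_p = 1$, and $\SD_{\SX,j}f(e_{i^*})$ is supported entirely on the $i^*$-th block with value $d_jf(\bar{x}_{i^*})$, so $\lVert \SD_{\SX,j}f(e_{i^*}) \rVert_q = \lVert d_jf(\bar{x}_{i^*}) \rVert_q = M$. Combining the two directions gives $\lVert \SD_{\SX,j}f \rVert_{p,q} = M$, and then Theorem~\ref{thm:mainEstimate} closes the proof.

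The main obstacle is essentially bookkeeping: one must carefully track which $\ell^p$-norm dominates which on $\mathbb{R}^N$ with counting measure, as the inequality goes in the direction opposite to the more familiar Jensen-type $L^p$-inclusions on probability spaces. Beyond that, the argument is a direct combination of a termwise bound and a rank-one test vector, and requires no optimization over a constrained domain as in the $p>q$ case.
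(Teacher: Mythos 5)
Your proof is correct, and while it shares the paper's skeleton (reduce to the operator norm $\lVert \SD_{\SX,j}f \rVert_{p,q}$ via Theorem~\ref{thm:mainEstimate}), the computation of that norm is done by a genuinely different route. The paper substitutes $H_i=|h_i|^p$, observes that the resulting objective $\sum_i c_i H_i^{q/p}$ is convex on the simplex $\{\sum_i H_i=1,\ H_i\ge 0\}$ because $q/p>1$, and concludes that the maximum sits at a vertex; this presentation runs in parallel with its treatment of the $p=q$ case (a linear program) and the $p>q$ case (H\"older). You instead prove a two-sided bound: a termwise estimate combined with the counting-measure monotonicity $\lVert \bar{h}\rVert_q \le \lVert \bar{h}\rVert_p$ for $p\le q$ gives the upper bound $M=\max_i\lVert d_jf(\bar{x}_i)\rVert_q$, and the basis vector $e_{i^*}$ attains it. Both arguments hinge on the same underlying phenomenon --- the optimal perturbation concentrates all its mass on the single point with the largest local derivative norm --- but yours avoids the change of variables and the convexity/vertex lemma entirely, and as a bonus it covers $p=q$ with no modification, so it would unify Theorems~\ref{thm:mainTheoremp=q} and~\ref{thm:mainTheoremp<q} into a single statement for $p\le q$. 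You are also right to flag the direction of the $\ell^p$-comparison as the one genuine pitfall; you have it the correct way around.
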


\begin{proof}
By Theorem \ref{thm:mainEstimate}, we have
$$s_{\SX,j}(f)=\lVert \SD_{\SX,j}f \rVert_{p,q}.$$
Thus, we have to compute
$$\lVert \SD_{\SX,j}f \rVert_{p,q}=\left( \max_{\lVert \bar{h}\rVert_p=1} \sum_{i=1}^N \left( \sum_{k=1}^m \left | \frac{\partial f_k}{\partial X_j}(\bar{x}_i) \right |^q\right) |h_i|^q \right)^{1/q}.$$
Changing the variable $H_i=|h_i|^p$, setting $\bar{H}=(H_1,\ldots,H_N)$ and rising to power $q$ the optimized function, we need to find the maximum of
$$\sum_{i=1}^N \left( \sum_{k=1}^m \left | \frac{\partial f_k}{\partial X_j}(\bar{x}_i) \right |^q\right) H_i^{q/p}$$
under the constraint $\sum_i H_i=1$, with $H_i\ge 0$ for all $i$. As $q/p>1$, the objective functional is a convex function on $H_i$ and, thus, it attains its maximum value at one of the vertices of the domain. It is then clear that the maximum is attained taking $H_i=1$ precisely where the coefficient $\sum_{k=1}^m \left | \frac{\partial f_k}{\partial X_j}(\bar{x}_i) \right |^q$ attains its maximum value and $H_i=0$ for the rest of the values, so
\begin{multline*}
\max_{\lVert \bar{h} \rVert_p=1} \lVert \SD_{\SX,j}f \rVert_q=\left( \max_{i} \sum_{k=1}^m \left | \frac{\partial f_k}{\partial X_j}(\bar{x}_i) \right |^q \right)^{1/q}\\
= \max_{i} \left(\sum_{k=1}^m \left | \frac{\partial f_k}{\partial X_j}(\bar{x}_i) \right |^q \right)^{1/q} = \max_{i} \lVert d_jf(\bar{x}_i) \rVert_q.
\end{multline*}
\end{proof}

\bibliographystyle{IEEEtran}
\bibliography{ref}

\begin{thebibliography}{10}
\providecommand{\url}[1]{#1}
\csname url@samestyle\endcsname
\providecommand{\newblock}{\relax}
\providecommand{\bibinfo}[2]{#2}
\providecommand{\BIBentrySTDinterwordspacing}{\spaceskip=0pt\relax}
\providecommand{\BIBentryALTinterwordstretchfactor}{4}
\providecommand{\BIBentryALTinterwordspacing}{\spaceskip=\fontdimen2\font plus
\BIBentryALTinterwordstretchfactor\fontdimen3\font minus
  \fontdimen4\font\relax}
\providecommand{\BIBforeignlanguage}[2]{{%
\expandafter\ifx\csname l@#1\endcsname\relax
\typeout{** WARNING: IEEEtran.bst: No hyphenation pattern has been}%
\typeout{** loaded for the language `#1'. Using the pattern for}%
\typeout{** the default language instead.}%
\else
\language=\csname l@#1\endcsname
\fi
#2}}
\providecommand{\BIBdecl}{\relax}
\BIBdecl

\bibitem{Kohli_2018}
P.~S. Kohli and S.~Arora, ``Application of machine learning in disease
  prediction,'' in \emph{2018 4th International Conference on Computing
  Communication and Automation (ICCCA)}, 2018, pp. 1--4.

\bibitem{Shamout_2021}
F.~Shamout, T.~Zhu, and D.~A. Clifton, ``Machine learning for clinical outcome
  prediction,'' \emph{IEEE Reviews in Biomedical Engineering}, vol.~14, pp.
  116--126, 2021.

\bibitem{Rashid_2021}
M.~Rashid, B.~S. Bari, Y.~Yusup, M.~A. Kamaruddin, and N.~Khan, ``A
  comprehensive review of crop yield prediction using machine learning
  approaches with special emphasis on palm oil yield prediction,'' \emph{IEEE
  Access}, vol.~9, pp. 63\,406--63\,439, 2021.

\bibitem{dean_2022}
\BIBentryALTinterwordspacing
J.~Dean, ``Google research: Themes from 2021 and beyond,'' Jan 2022. [Online].
  Available:
  \url{https://ai.googleblog.com/2022/01/google-research-themes-from-2021-and.html}
\BIBentrySTDinterwordspacing

\bibitem{Samek_2021}
W.~Samek, G.~Montavon, S.~Lapuschkin, C.~J. Anders, and K.-R. Muller,
  ``Explaining deep neural networks and beyond: A review of methods and
  applications,'' \emph{Proc. IEEE Inst. Electr. Electron. Eng.}, vol. 109,
  no.~3, pp. 247--278, Mar. 2021.

\bibitem{Samek2019-mm}
W.~Samek, G.~Montavon, A.~Vedaldi, L.~K. Hansen, and K.-R. Muller, Eds.,
  \emph{Explainable {AI}: Interpreting, explaining and visualizing deep
  learning}, 1st~ed., ser. Lecture notes in computer science.\hskip 1em plus
  0.5em minus 0.4em\relax Cham, Switzerland: Springer Nature, Aug. 2019.

\bibitem{Barredo_2020}
\BIBentryALTinterwordspacing
A.~{Barredo Arrieta}, N.~Díaz-Rodríguez, J.~{Del Ser}, A.~Bennetot, S.~Tabik,
  A.~Barbado, S.~Garcia, S.~Gil-Lopez, D.~Molina, R.~Benjamins, R.~Chatila, and
  F.~Herrera, ``Explainable artificial intelligence (xai): Concepts,
  taxonomies, opportunities and challenges toward responsible ai,''
  \emph{Information Fusion}, vol.~58, pp. 82--115, 2020. [Online]. Available:
  \url{https://www.sciencedirect.com/science/article/pii/S1566253519308103}
\BIBentrySTDinterwordspacing

\bibitem{Benjamins_2019}
\BIBentryALTinterwordspacing
R.~Benjamins, A.~Barbado, and D.~Sierra, ``Responsible ai by design in
  practice,'' 2019. [Online]. Available: \url{https://arxiv.org/abs/1909.12838}
\BIBentrySTDinterwordspacing

\bibitem{Preece_2018}
\BIBentryALTinterwordspacing
A.~Preece, D.~Harborne, D.~Braines, R.~Tomsett, and S.~Chakraborty,
  ``Stakeholders in explainable ai,'' 2018. [Online]. Available:
  \url{https://arxiv.org/abs/1810.00184}
\BIBentrySTDinterwordspacing

\bibitem{Carvalho_2019}
\BIBentryALTinterwordspacing
D.~V. Carvalho, E.~M. Pereira, and J.~S. Cardoso, ``Machine learning
  interpretability: A survey on methods and metrics,'' \emph{Electronics},
  vol.~8, no.~8, 2019. [Online]. Available:
  \url{https://www.mdpi.com/2079-9292/8/8/832}
\BIBentrySTDinterwordspacing

\bibitem{Cheng_2018}
C.-H. Cheng, F.~Diehl, G.~Hinz, Y.~Hamza, G.~Nuehrenberg, M.~Rickert, H.~Ruess,
  and M.~Truong-Le, ``Neural networks for safety-critical applications —
  challenges, experiments and perspectives,'' in \emph{2018 Design, Automation
  and Test in Europe Conference and Exhibition}, 2018, pp. 1005--1006.

\bibitem{input_permutation_2021}
D.~Pawade, A.~Dalvi, J.~Gopani, C.~Kachaliya, H.~Shah, and H.~Shah, ``Xai---an
  approach for understanding decisions made by neural network,'' in
  \emph{Recent Trends in Communication and Intelligent Systems}, A.~K.
  Singh~Pundir, A.~Yadav, and S.~Das, Eds.\hskip 1em plus 0.5em minus
  0.4em\relax Singapore: Springer Singapore, 2021, pp. 155--165.

\bibitem{Shapley_1953}
L.~S. Shapley, ``A value for n-person games,'' pp. 307--318, 1953.

\bibitem{perm_shap_2020}
J.~Tritscher, M.~Ring, D.~Schlr, L.~Hettinger, and A.~Hotho, ``Evaluation of
  post-hoc xai approaches through synthetic tabular data,'' in
  \emph{Foundations of Intelligent Systems}, D.~Helic, G.~Leitner,
  M.~Stettinger, A.~Felfernig, and Z.~W. Ra{\'{s}}, Eds.\hskip 1em plus 0.5em
  minus 0.4em\relax Cham: Springer International Publishing, 2020, pp.
  422--430.

\bibitem{Hariharan2022}
\BIBentryALTinterwordspacing
S.~Hariharan, R.~R. Rejimol~Robinson, R.~R. Prasad, C.~Thomas, and
  N.~Balakrishnan, ``Xai for intrusion detection system: comparing explanations
  based on global and local scope,'' \emph{Journal of Computer Virology and
  Hacking Techniques}, Jul 2022. [Online]. Available:
  \url{https://doi.org/10.1007/s11416-022-00441-2}
\BIBentrySTDinterwordspacing

\bibitem{dimopoulos_use_1995}
I.~Dimopoulos, P.~Bourret, and S.~Lek, ``Use of some sensitivity criteria for
  choosing networks with good generalization ability,'' \emph{Neural Processing
  Letters}, vol.~2, pp. 1--4, 1995.

\bibitem{Gevrey_2003}
\BIBentryALTinterwordspacing
M.~Gevrey, I.~Dimopoulos, and S.~Lek, ``Review and comparison of methods to
  study the contribution of variables in artificial neural network models,''
  \emph{Ecological Modelling}, vol. 160, no.~3, pp. 249--264, 2003, modelling
  the structure of acquatic communities: concepts, methods and problems.
  [Online]. Available:
  \url{https://www.sciencedirect.com/science/article/pii/S0304380002002570}
\BIBentrySTDinterwordspacing

\bibitem{Gevrey_2006}
\BIBentryALTinterwordspacing
------, ``Two-way interaction of input variables in the sensitivity analysis of
  neural network models,'' \emph{Ecological Modelling}, vol. 195, no.~1, pp.
  43--50, 2006, selected Papers from the Third Conference of the International
  Society for Ecological Informatics (ISEI), August 26--30, 2002,
  Grottaferrata, Rome, Italy. [Online]. Available:
  \url{https://www.sciencedirect.com/science/article/pii/S0304380005005752}
\BIBentrySTDinterwordspacing

\bibitem{Durand2010}
\BIBentryALTinterwordspacing
E.~Durand-Cartagena and J.~Jaramillo, ``Pointwise lipschitz functions on metric
  spaces,'' \emph{Journal of Mathematical Analysis and Applications}, vol. 363,
  no.~2, pp. 525--548, 2010. [Online]. Available:
  \url{https://www.sciencedirect.com/science/article/pii/S0022247X0900780X}
\BIBentrySTDinterwordspacing

\bibitem{Szegedy2014}
\BIBentryALTinterwordspacing
C.~Szegedy, W.~Zaremba, I.~Sutskever, J.~Bruna, D.~Erhan, I.~Goodfellow, and
  R.~Fergus, ``Intriguing properties of neural networks,'' in
  \emph{International Conference on Learning Representations}, 2014. [Online].
  Available: \url{http://arxiv.org/abs/1312.6199}
\BIBentrySTDinterwordspacing

\bibitem{Thomas2022}
\BIBentryALTinterwordspacing
T.~Fel, D.~Vigouroux, R.~Cad{\`e}ne, and T.~Serre, ``{How Good is your
  Explanation? Algorithmic Stability Measures to Assess the Quality of
  Explanations for Deep Neural Networks},'' in \emph{{2022 CVF Winter
  Conference on Applications of Computer Vision (WACV)}}, Hawaii, United
  States, Jan. 2022. [Online]. Available:
  \url{https://hal.science/hal-02930949}
\BIBentrySTDinterwordspacing

\bibitem{molnar_2022}
\BIBentryALTinterwordspacing
C.~Molnar, ``Interpretable machine learning,'' Jul 2022. [Online]. Available:
  \url{https://christophm.github.io/interpretable-ml-book/}
\BIBentrySTDinterwordspacing

\bibitem{Letzgus_2022}
S.~Letzgus, P.~Wagner, J.~Lederer, W.~Samek, K.-R. M{\"u}ller, and G.~Montavon,
  ``Toward explainable artificial intelligence for regression models: A
  methodological perspective,'' \emph{IEEE Signal Processing Magazine},
  vol.~39, no.~4, pp. 40--58, 2022.

\bibitem{Minh_2022}
\BIBentryALTinterwordspacing
D.~Minh, H.~X. Wang, Y.~F. Li, and T.~N. Nguyen, ``Explainable artificial
  intelligence: a comprehensive review,'' \emph{Artificial Intelligence
  Review}, vol.~55, no.~5, pp. 3503--3568, Jun 2022. [Online]. Available:
  \url{https://doi.org/10.1007/s10462-021-10088-y}
\BIBentrySTDinterwordspacing

\bibitem{Speith_2022}
\BIBentryALTinterwordspacing
T.~Speith, ``A review of taxonomies of explainable artificial intelligence
  (xai) methods,'' in \emph{2022 ACM Conference on Fairness, Accountability,
  and Transparency}, ser. FAccT '22.\hskip 1em plus 0.5em minus 0.4em\relax New
  York, NY, USA: Association for Computing Machinery, 2022, p. 2239–2250.
  [Online]. Available: \url{https://doi.org/10.1145/3531146.3534639}
\BIBentrySTDinterwordspacing

\bibitem{bykov2020i}
K.~Bykov, M.~M.~C. Höhne, K.-R. Müller, S.~Nakajima, and M.~Kloft, ``How much
  can i trust you? -- quantifying uncertainties in explaining neural
  networks,'' 2020.

\bibitem{kastner_2021}
\BIBentryALTinterwordspacing
C.~Kästner, ``Interpretability and explainability,'' Jul 2021. [Online].
  Available:
  \url{https://ckaestne.medium.com/interpretability-and-explainability-a80131467856}
\BIBentrySTDinterwordspacing

\bibitem{scardi_developing_1999}
M.~Scardi and L.~W. Harding, ``\BIBforeignlanguage{en}{Developing an empirical
  model of phytoplankton primary production: A neural network case study},''
  \emph{\BIBforeignlanguage{en}{Ecological Modelling}}, vol. 120, no. 2-3, pp.
  213--223, 1999.

\bibitem{dimopoulos_neural_1999}
I.~Dimopoulos, J.~Chronopoulos, A.~Chronopoulou-Sereli, and S.~Lek, ``Neural
  network models to study relationships between lead concentration in grasses
  and permanent urban descriptors in athens city (greece),'' \emph{Ecological
  Modelling}, vol. 120, no. 2-3, pp. 157--165, 1999.

\bibitem{munoz_1998}
A.~Muñoz and T.~Czernichow, ``Variable selection using feedforward and
  recurrent neural networks,'' \emph{Engineering Intelligent Systems for
  Electrical Engineering and Communications}, vol.~6, no.~2, pp. 91--102, 1998.

\bibitem{white_statistical_2001}
H.~White and J.~Racine, ``Statistical inference, the bootstrap, and
  neural-network modeling with application to foreign exchange rates,''
  \emph{IEEE Transactions on Neural Networks}, vol.~12, no.~4, pp. 657--673,
  2001.

\bibitem{strumbelj_kononenko_2010}
\BIBentryALTinterwordspacing
E.~Strumbelj and I.~Kononenko, Jan 2010. [Online]. Available:
  \url{https://www.jmlr.org/papers/volume11/strumbelj10a/strumbelj10a.pdf}
\BIBentrySTDinterwordspacing

\bibitem{Lundberg_2017}
\BIBentryALTinterwordspacing
S.~M. Lundberg and S.-I. Lee, ``A unified approach to interpreting model
  predictions,'' in \emph{Advances in Neural Information Processing Systems
  30}, I.~Guyon, U.~V. Luxburg, S.~Bengio, H.~Wallach, R.~Fergus,
  S.~Vishwanathan, and R.~Garnett, Eds.\hskip 1em plus 0.5em minus 0.4em\relax
  Curran Associates, Inc., 2017, pp. 4765--4774. [Online]. Available:
  \url{http://papers.nips.cc/paper/7062-a-unified-approach-to-interpreting-model-predictions.pdf}
\BIBentrySTDinterwordspacing

\bibitem{pizarroso_2022}
\BIBentryALTinterwordspacing
J.~Pizarroso, J.~Portela, and A.~Muñoz, ``Neuralsens: Sensitivity analysis of
  neural networks,'' \emph{Journal of Statistical Software}, vol. 102, no.~7,
  p. 1–36, 2022. [Online]. Available:
  \url{https://www.jstatsoft.org/index.php/jss/article/view/v102i07}
\BIBentrySTDinterwordspacing

\bibitem{White_2001}
H.~White and J.~Racine, ``Statistical inference, the bootstrap, and
  neural-network modeling with application to foreign exchange rates,''
  \emph{IEEE Transactions on Neural Networks}, vol.~12, no.~4, pp. 657--673,
  2001.

\bibitem{yeh_cheng_2010}
\BIBentryALTinterwordspacing
I.-C. Yeh and W.-L. Cheng, ``First and second order sensitivity analysis of
  mlp,'' Jan 2010. [Online]. Available:
  \url{https://www.sciencedirect.com/science/article/abs/pii/S0925231210000639}
\BIBentrySTDinterwordspacing

\bibitem{zeng_zhen_he_han_2017}
\BIBentryALTinterwordspacing
X.~Zeng, Z.~Zhen, J.~He, and L.~Han, ``A feature selection approach based on
  sensitivity of rbfnns,'' Nov 2017. [Online]. Available:
  \url{https://www.sciencedirect.com/science/article/abs/pii/S092523121731706X}
\BIBentrySTDinterwordspacing

\bibitem{california_2022}
\BIBentryALTinterwordspacing
I.~Learning~Lab, ``The california housing dataset,'' 2022. [Online]. Available:
  \url{https://inria.github.io/scikit-learn-mooc/python_scripts/datasets_california_housing.html}
\BIBentrySTDinterwordspacing

\bibitem{diabetes_sklearn}
F.~Pedregosa, G.~Varoquaux, A.~Gramfort, V.~Michel, B.~Thirion, O.~Grisel,
  M.~Blondel, P.~Prettenhofer, R.~Weiss, V.~Dubourg, J.~Vanderplas, A.~Passos,
  D.~Cournapeau, M.~Brucher, M.~Perrot, and E.~Duchesnay, ``Scikit-learn:
  Machine learning in {P}ython,'' \emph{Journal of Machine Learning Research},
  vol.~12, pp. 2825--2830, 2011.

\bibitem{Kelley_1997}
\BIBentryALTinterwordspacing
R.~{Kelley Pace} and R.~Barry, ``Sparse spatial autoregressions,''
  \emph{Statistics and Probability Letters}, vol.~33, no.~3, pp. 291--297,
  1997. [Online]. Available:
  \url{https://www.sciencedirect.com/science/article/pii/S016771529600140X}
\BIBentrySTDinterwordspacing

\bibitem{tryg_diabetes_2021}
\BIBentryALTinterwordspacing
S.~Felson, ``Tryglicerides and diabetes,'' \emph{WebMD Medical Reference},
  2021. [Online]. Available:
  \url{https://www.webmd.com/diabetes/high-triglycerides}
\BIBentrySTDinterwordspacing

\bibitem{tryg_diabetes_2022}
\BIBentryALTinterwordspacing
N.~H. L.~B. Institute, ``High blood triglycerides,'' \emph{National Heart,
  Lung, and Blood Institute.}, 2022. [Online]. Available:
  \url{https://www.nhlbi.nih.gov/health-topics/high-blood-triglycerides}
\BIBentrySTDinterwordspacing

\bibitem{Hjellvik_2012}
V.~Hjellvik, S.~Sakshaug, and H.~Str{\o}m, ``\BIBforeignlanguage{en}{Body mass
  index, triglycerides, glucose, and blood pressure as predictors of type 2
  diabetes in a middle-aged norwegian cohort of men and women},''
  \emph{\BIBforeignlanguage{en}{Clin. Epidemiol.}}, vol.~4, pp. 213--224, Aug.
  2012.

\bibitem{gray_picone_sloan_yashkin_2015}
\BIBentryALTinterwordspacing
N.~Gray, G.~Picone, F.~Sloan, and A.~Yashkin, ``Relation between bmi and
  diabetes mellitus and its complications among us older adults,'' Jan 2015.
  [Online]. Available:
  \url{https://www.ncbi.nlm.nih.gov/pmc/articles/PMC4457375/}
\BIBentrySTDinterwordspacing

\bibitem{Wang2021-ha}
X.~Wang, J.~Liu, Z.~Cheng, Y.~Zhong, X.~Chen, and W.~Song,
  ``\BIBforeignlanguage{en}{Triglyceride glucose-body mass index and the risk
  of diabetes: a general population-based cohort study},''
  \emph{\BIBforeignlanguage{en}{Lipids Health Dis.}}, vol.~20, no.~1, p.~99,
  Sep. 2021.

\bibitem{Jiang2021-cg}
C.~Jiang, R.~Yang, M.~Kuang, M.~Yu, M.~Zhong, and Y.~Zou,
  ``\BIBforeignlanguage{en}{Triglyceride glucose-body mass index in identifying
  high-risk groups of pre-diabetes},'' \emph{\BIBforeignlanguage{en}{Lipids
  Health Dis.}}, vol.~20, no.~1, p. 161, Nov. 2021.

\bibitem{diabetes_1}
\BIBentryALTinterwordspacing
H.~Gylling, J.~A. Tuominen, V.~A. Koivisto, and T.~A. Miettinen, ``{Cholesterol
  Metabolism in Type 1 Diabetes},'' \emph{Diabetes}, vol.~53, no.~9, pp.
  2217--2222, 09 2004. [Online]. Available:
  \url{https://doi.org/10.2337/diabetes.53.9.2217}
\BIBentrySTDinterwordspacing

\bibitem{diabetes_4}
Y.~Lee and W.~J. Siddiqui, ``Cholesterol levels,'' in \emph{{StatPearls}
  [Internet]}.\hskip 1em plus 0.5em minus 0.4em\relax StatPearls Publishing,
  Jul. 2022.

\bibitem{diabetes_6}
M.~Budoff, ``Triglycerides and triglyceride-rich lipoproteins in the causal
  pathway of cardiovascular disease,'' \emph{The American Journal of
  Cardiology}, vol. 118, no.~1, p. 138–145, 2016.

\bibitem{diabetes_2}
\BIBentryALTinterwordspacing
Y.~C. Klimentidis, A.~Arora, M.~Newell, J.~Zhou, J.~M. Ordovas, B.~J. Renquist,
  and A.~C. Wood, ``{Phenotypic and Genetic Characterization of Lower LDL
  Cholesterol and Increased Type 2 Diabetes Risk in the UK Biobank},''
  \emph{Diabetes}, vol.~69, no.~10, pp. 2194--2205, 06 2020. [Online].
  Available: \url{https://doi.org/10.2337/db19-1134}
\BIBentrySTDinterwordspacing

\bibitem{diabetes_3}
\BIBentryALTinterwordspacing
C.~Perego, L.~{Da Dalt}, A.~Pirillo, A.~Galli, A.~L. Catapano, and G.~D.
  Norata, ``Cholesterol metabolism, pancreatic $\beta$-cell function and
  diabetes,'' \emph{Biochimica et Biophysica Acta (BBA) - Molecular Basis of
  Disease}, vol. 1865, no.~9, pp. 2149--2156, 2019. [Online]. Available:
  \url{https://www.sciencedirect.com/science/article/pii/S0925443919301231}
\BIBentrySTDinterwordspacing

\bibitem{bpdiabetes_2022}
\BIBentryALTinterwordspacing
``Diabetes and blood pressure.'' [Online]. Available:
  \url{https://www.diabetes.org.uk/guide-to-diabetes/managing-your-diabetes/blood-pressure}
\BIBentrySTDinterwordspacing

\bibitem{helmer_2022}
\BIBentryALTinterwordspacing
J.~Helmer, ``How age relates to type 2 diabetes,'' Apr 2022. [Online].
  Available: \url{https://www.webmd.com/diabetes/diabetes-link-age}
\BIBentrySTDinterwordspacing

\bibitem{diabetes_7}
\BIBentryALTinterwordspacing
L.~Arnetz, N.~R. Ekberg, and M.~Alvarsson, ``Sex differences in type 2
  diabetes: focus on disease course and outcomes,'' \emph{Diabetes, Metabolic
  Syndrome and Obesity: Targets and Therapy}, vol.~7, pp. 409--420, 2014, pMID:
  25258546. [Online]. Available:
  \url{https://www.tandfonline.com/doi/abs/10.2147/DMSO.S51301}
\BIBentrySTDinterwordspacing

\bibitem{diabetes_8}
\BIBentryALTinterwordspacing
P.~M. Nilsson, H.~Theobald, G.~Journath, and T.~Fritz, ``Gender differences in
  risk factor control and treatment profile in diabetes: a study in 229 swedish
  primary health care centres,'' \emph{Scandinavian Journal of Primary Health
  Care}, vol.~22, no.~1, pp. 27--31, 2004, pMID: 15119517. [Online]. Available:
  \url{https://doi.org/10.1080/02813430310003264}
\BIBentrySTDinterwordspacing

\end{thebibliography}

\end{document}